\newcommand{\myAlgComment}[1]{\tcp{\small\color{blue!60!black} #1}}
\let\underbar\undefined
\let\save@mathaccent\mathaccent
\newcommand*\if@single[3]{%
  \setbox0\hbox{${\mathaccent"0362{#1}}^H$}%
  \setbox2\hbox{${\mathaccent"0362{\kern0pt#1}}^H$}%
  \ifdim\ht0=\ht2 #3\else #2\fi
  }
\newcommand*\rel@kern[1]{\kern#1\dimexpr\macc@kerna}
\newcommand*\widebar[1]{\@ifnextchar^{{\wide@bar{#1}{0}}}{\wide@bar{#1}{1}}}
\newcommand*\underbar[1]{\@ifnextchar_{{\under@bar{#1}{0}}}{\under@bar{#1}{1}}}
\newcommand*\wide@bar[2]{\if@single{#1}{\wide@bar@{#1}{#2}{1}}{\wide@bar@{#1}{#2}{2}}}
\newcommand*\under@bar[2]{\if@single{#1}{\under@bar@{#1}{#2}{1}}{\under@bar@{#1}{#2}{2}}}
\newcommand*\wide@bar@[3]{%
  \begingroup
  \def\mathaccent##1##2{%
    \let\mathaccent\save@mathaccent
    \if#32 \let\macc@nucleus\first@char \fi
    \setbox\z@\hbox{$\macc@style{\macc@nucleus}_{}$}%
    \setbox\tw@\hbox{$\macc@style{\macc@nucleus}{}_{}$}%
    \dimen@\wd\tw@
    \advance\dimen@-\wd\z@
    \divide\dimen@ 3
    \@tempdima\wd\tw@
    \advance\@tempdima-\scriptspace
    \divide\@tempdima 10
    \advance\dimen@-\@tempdima
    \ifdim\dimen@>\z@ \dimen@0pt\fi
    \rel@kern{0.6}\kern-\dimen@
    \if#31
      \overline{\rel@kern{-0.6}\kern\dimen@\macc@nucleus\rel@kern{0.4}\kern\dimen@}%
      \advance\dimen@0.4\dimexpr\macc@kerna
      \let\final@kern#2%
      \ifdim\dimen@<\z@ \let\final@kern1\fi
      \if\final@kern1 \kern-\dimen@\fi
    \else
      \overline{\rel@kern{-0.6}\kern\dimen@#1}%
    \fi
  }%
  \macc@depth\@ne
  \let\math@bgroup\@empty \let\math@egroup\macc@set@skewchar
  \mathsurround\z@ \frozen@everymath{\mathgroup\macc@group\relax}%
  \macc@set@skewchar\relax
  \let\mathaccentV\macc@nested@a
  \if#31
    \macc@nested@a\relax111{#1}%
  \else
    \def\gobble@till@marker##1\endmarker{}%
    \futurelet\first@char\gobble@till@marker#1\endmarker
    \ifcat\noexpand\first@char A\else
      \def\first@char{}%
    \fi
    \macc@nested@a\relax111{\first@char}%
  \fi
  \endgroup
}
\newcommand*\under@bar@[3]{%
  \begingroup
  \def\mathaccent##1##2{%
    \let\mathaccent\save@mathaccent
    \if#32 \let\macc@nucleus\first@char \fi
    \setbox\z@\hbox{$\macc@style{\macc@nucleus}_{}$}%
    \setbox\tw@\hbox{$\macc@style{\macc@nucleus}{}_{}$}%
    \dimen@\wd\tw@
    \advance\dimen@-\wd\z@
    \divide\dimen@ 3
    \@tempdima\wd\tw@
    \advance\@tempdima-\scriptspace
    \divide\@tempdima 10
    \advance\dimen@-\@tempdima
    \ifdim\dimen@>\z@ \dimen@0pt\fi
    \rel@kern{0.6}\kern-\dimen@
    \if#31
      \underline{\rel@kern{-0.6}\kern\dimen@\macc@nucleus\rel@kern{0.4}\kern\dimen@}%
      \advance\dimen@0.4\dimexpr\macc@kerna
      \let\final@kern#2%
      \ifdim\dimen@<\z@ \let\final@kern1\fi
      \if\final@kern1 \kern-\dimen@\fi
    \else
      \underline{\rel@kern{-0.6}\kern\dimen@#1}%
    \fi
  }%
  \macc@depth\@ne
  \let\math@bgroup\@empty \let\math@egroup\macc@set@skewchar
  \mathsurround\z@ \frozen@everymath{\mathgroup\macc@group\relax}%
  \macc@set@skewchar\relax
  \let\mathaccentV\macc@nested@a
  \if#31
    \macc@nested@a\relax111{#1}%
  \else
    \def\gobble@till@marker##1\endmarker{}%
    \futurelet\first@char\gobble@till@marker#1\endmarker
    \ifcat\noexpand\first@char A\else
      \def\first@char{}%
    \fi
    \macc@nested@a\relax111{\first@char}%
  \fi
  \endgroup
}
\crefname{myAppendix}{Appendix}{Appendices}
\Crefname{myAppendix}{Appendix}{Appendices}
\let\refeq\undefined
\numberwithin{equation}{section}
\DeclarePairedDelimiter{\abs}{\lvert}{\rvert} %
\DeclarePairedDelimiter{\brk}{[}{]}
\DeclarePairedDelimiter{\nrm}{\|}{\|}
\let\Pr\undefined
\DeclareMathOperator{\Pr}{Pr}
\newcommand{\mc}[1]{\mathcal{#1}}
\newcommand{\wt}[1]{\widetilde{#1}}
\newcommand{\wh}[1]{\widehat{#1}}
\newcommand{\wb}[1]{\widebar{#1}}
\def\ddefloop#1{\ifx\ddefloop#1\else\ddef{#1}\expandafter\ddefloop\fi}
\def\ddef#1{\expandafter\def\csname bb#1\endcsname{\ensuremath{\mathbb{#1}}}}
\def\ddefloop#1{\ifx\ddefloop#1\else\ddef{#1}\expandafter\ddefloop\fi}
\def\ddef#1{\expandafter\def\csname b#1\endcsname{\ensuremath{\mathbf{#1}}}}
\def\ddef#1{\expandafter\def\csname sf#1\endcsname{\ensuremath{\mathsf{#1}}}}
\def\ddef#1{\expandafter\def\csname c#1\endcsname{\ensuremath{\mathcal{#1}}}}
\def\ddef#1{\expandafter\def\csname h#1\endcsname{\ensuremath{\widehat{#1}}}}
\def\ddef#1{\expandafter\def\csname hc#1\endcsname{\ensuremath{\widehat{\mathcal{#1}}}}}
\def\ddef#1{\expandafter\def\csname t#1\endcsname{\ensuremath{\widetilde{#1}}}}
\def\ddef#1{\expandafter\def\csname tc#1\endcsname{\ensuremath{\widetilde{\mathcal{#1}}}}}
\def\ddefloop#1{\ifx\ddefloop#1\else\ddef{#1}\expandafter\ddefloop\fi}
\def\ddef#1{\expandafter\def\csname scr#1\endcsname{\ensuremath{\mathscr{#1}}}}
\renewcommand{\refeq}[1]{Eq.~(\ref{#1})}
\renewcommand{\xhdr}[1]{\vspace{1mm} \noindent{\bf #1}}
\newcommand{\bigoh}{\mc{O}}
\newcommand{\bigoht}{\wt{\mc{O}}}
\renewcommand{\tO}{\bigoht}
\newcommand{\fhat}{\wh{f}}
\newcommand{\term}[1]{\ensuremath{\texttt{#1}}\xspace}
\newcommand{\eqAND}{\quad\text{and}\quad}
\newcommand{\algTAB}{\text{~~~~}}
\newenvironment{BoxedProblem}[2][Problem protocol]
    {\begin{oframed}
        {\noindent {\bf #1:} #2}\newline
        \rule{\textwidth}{0.4pt}\vspace{1mm}}
    {\vspace{-1mm}\end{oframed}}
\newcommand{\BwK}{\term{BwK}}
\newcommand{\CBwK}{\term{CBwK}}
\newcommand{\BwLC}{\term{BwLC}} 
\newcommand{\CBwLC}{\term{CBwLC}} 
\newcommand{\CB}{\term{CB}}
\newcommand{\myAlg}{\term{LagrangeCBwLC}} 
\newcommand{\myAlgResc}{\term{LagrangeCBwLC.rescaled}} 
    \newcommand{\LagCBwLC}{\term{LagrangeCBwLC}}
\newcommand{\LagBwK}{\term{LagrangeBwK}}
\newcommand{\SqCB}{\term{SquareCB}}
\newcommand{\UcbBwK}{\term{UCB-BwK}} 
\newcommand{\ALG}{\term{ALG}} 
\newcommand{\OPT}{\term{Opt}}
\newcommand{\rew}{\term{Rew}}
\newcommand{\regOut}{\term{reg}_{\term{out}}} 
\newcommand{\regPace}{\term{reg}_{\term{pace}}} 
\newcommand{\outD}{\mD^{\term{out}}} 
\newcommand{\viol}{V}  
\newcommand{\vM}{\mathbf{M}} 
\newcommand{\vo}{\vec{o}}    
\newcommand{\LagLP}{\mL_{\term{LP}}} 
\newcommand{\OPTLP}{\OPT_{\LP}}
\newcommand{\allPi}{\Pi} 
\newcommand{\oracle}{$\mathrm{\mathbf{Alg}}_{\mathsf{Est}}$\xspace}
\newcommand{\LagOracle}{\mO_{\term{Lag}}} 
\newcommand{\LagU}{U^{\term{Lag}}} 
\newcommand{\aux}{\term{aux}} 
\newcommand{\err}{\mathrm{\mathbf{Err}}} 
\newcommand{\regcontext}{regression-context\xspace} 
\newcommand{\regcontexts}{regression-contexts\xspace}
\newcommand{\Lag}{\mL}       
\newcommand{\dual}[1][t]{\lambda_{#1}}
\newcommand{\PrimalALG}{$\mathrm{\mathbf{Alg}}_{\mathsf{Prim}}$\xspace}
\newcommand{\DualALG}{$\mathrm{\mathbf{Alg}}_{\mathsf{Dual}}$\xspace}
\newcommand{\PrimalREG}{\mathrm{\mathbf{Reg}}_{\mathsf{Prim}}}
\newcommand{\DualREG}{\mathrm{\mathbf{Reg}}_{\mathsf{Dual}}}
\newcommand{\ConcREG}{R_{\mathsf{conc}}}
\newcommand{\RegOne}[1][T]{\widebar{\mathrm{\mathbf{Reg}}}_{\mathsf{Prim}}(#1,\delta)}
\newcommand{\RegTwo}[1][T]{\widebar{\mathrm{\mathbf{Reg}}}_{\mathsf{Dual}}(#1,\delta)}
\newcommand{\cnorm}{c^{\mathrm{norm}}} 
\newcommand{\estL}{\widehat{\Lag}}   
\newcommand{\pbar}{\widebar{D}}
\newcommand{\lambdabarT}{\wb{\lambda}_T}
\newcommand{\enviswitches}{environment-switches\xspace}
\newcommand{\seq}{\vec{\tau}}
\newcommand{\OPTLPt}[1][t]{\OPT_{\term{LP},\,#1}}
\newcommand{\OPTpac}{\OPT_{\term{pace}}}
\newcommand{\LP}{\term{LP}}   
\newcommand{\Vmax}{V_{\max}} 
\newcommand{\LagLPeta}{\LagLP^{\eta}}
\newcommand{\lep}[1]{\mathop  \le \limits^{(#1)}}
\newcommand{\gep}[1]{\mathop  \ge \limits^{(#1)}}
\newcommand{\ep}[1]{\mathop  = \limits^{(#1)}}
\newcommand{\norm}[1]{\left\lVert#1\right\rVert}
\newcommand{\Real}{\mathbb{R}}
\title{Contextual Bandits with Packing and Covering Constraints:\\
A Modular Lagrangian Approach via Regression%
\footnote{
\textbf{Version history.} A preliminary version of this paper, authored by A. Slivkins, K.A. Sankararaman and D.J. Foster, has been published at COLT'23.  \textbf{The current version features an important improvement} (since Jun'24), due to Xingyu Zhou. Specifically, the $\sqrt{T}$-regret result in \cref{thm:main-rootT}(a) holds under a much weaker assumption: that some solution $D$ of the linear program \eqref{eq:LP} is feasible by a constant margin (previously, we needed $D$ to be an \emph{optimal} solution). This improvement is propagated through the corollaries. Consequently, we now position \cref{thm:main-rootT}(a) as the main guarantee, whereas the conference version emphasized the $T^{3/4}$-regret guarantee stemming from the more general analysis in \cref{thm:main-rootT}(b).
Another addition compared to the conference version is a variant of \cref{thm:main-rootT}(a) with zero constraint violation (\cref{app:zero}).


\vspace{2mm}
The Oct'24 version spells out the proof sketches in \cref{sec:shifts}. In particular, we fixed an inaccuracy when the analysis in \cref{sec:Lag} is invoked. Essentially, we observed that this analysis carries over seamlessly in terms of the actual primal (resp., dual) regret rather than the respective regret bounds.

\vspace{2mm}
An earlier version of our result for \CBwK with hard-stopping (\cref{sec:CB} leading up to \cref{cor:CB}(c)) has appeared as a standalone technical report authored by A. Slivkins and D.J. Foster. This technical report has been available at {\tt arxiv.org/abs/2211.07484v1} since November 2022 and was circulated informally since Spring 2022.

\vspace{2mm}
\textbf{Acknowledgements:} The authors are grateful to Giannis Fikioris for insightful conversations on \CBwLC.\newline\vspace{2mm}
}}
\author{%
\begin{tabular}{ccc}
Aleksandrs Slivkins%
\footnote{Microsoft Research NYC. Emails: {\tt\{slivkins,dylanfoster\}@microsoft.com}}
&~~~&
Xingyu Zhou%
\footnote{Wayne State University, {\tt xingyu.zhou@wayne.edu}}
\\
Karthik Abinav Sankararaman%
\footnote{Meta. Email: {\tt karthikabinavs@gmail.com}}
&&
Dylan J. Foster%
\footnotemark[2]
\end{tabular}}
\date{First version: March 2023\\This version: November 2024}
\begin{document}

\maketitle

\begin{abstract}

We consider \emph{contextual bandits with linear constraints} (\CBwLC), a variant of contextual bandits in which the algorithm consumes multiple resources subject to linear constraints on total consumption. This problem generalizes \emph{contextual bandits with knapsacks} (\CBwK), allowing for packing and covering constraints, as well as positive and negative resource consumption. We provide the first algorithm for \CBwLC (or \CBwK) that is based on regression oracles. The algorithm
is simple, computationally efficient, and statistically optimal under mild assumptions.
Further, we provide the first vanishing-regret guarantees for \CBwLC (or \CBwK) that extend beyond the stochastic environment. We side-step strong impossibility results from prior work by identifying a weaker (and, arguably, fairer) benchmark to compare against. Our algorithm builds on \LagBwK \citep{AdvBwK-focs19-conf,AdvBwK-focs19}, a Lagrangian-based technique for \CBwK, and \SqCB \citep{regressionCB-icml20}, a regression-based technique for contextual bandits. Our analysis leverages the inherent modularity of both techniques.

\noindent\textbf{Keywords:} multi-armed bandits, contextual bandits, bandits with knapsacks, regression oracles, primal-dual algorithms

\end{abstract}

\newpage

\section{Introduction}
\label{sec:intro}
\xhdr{Our scope.} We consider a problem called \emph{contextual bandits with linear constraints} (\CBwLC). In this problem, an algorithm chooses from a fixed set of $K$ arms and consumes $d\geq 1$ constrained \emph{resources}. In each round $t$, the  algorithm observes a context $x_t$, chooses an arm $a_t$, receives a reward $r_t\in[0,1]$, and also consumes some bounded amount of each resource. (So, the outcome of choosing an arm is a $(d+1)$-dimensional vector.) The consumption of a given resource could also be negative, corresponding to replenishment thereof. The algorithm proceeds for $T$ rounds, and faces
a constraint on the total consumption of each resource $i$: either a \emph{packing constraint} (``at most $B_i$") or a \emph{covering constraint} (``at least $B_i$") for some parameter $B_i\leq T$. We focus on the stochastic environment, wherein the context and the arms' outcome vectors are drawn from a fixed joint distribution, independently in each round. On a high level, the challenge is to simultaneously handle bandits with contexts and resource constraints.

\CBwLC subsumes two well-studied bandit problems: \emph{contextual bandits}, the special case with no resources, and \emph{bandits with knapsacks} (\BwK), the special case with no contexts \asedit{and some additional simplifications}. Specifically, \BwK is a special case with no contexts,
only packing constraints, non-negative resource consumption, and a \emph{null arm} that allows one to skip a round. Most prior work on \BwK assumes \emph{hard-stopping}: the algorithm must stop (or, alternatively, permanently switch to the null arm) as soon as one of the constraints is violated.%
\footnote{\label{fn:hard-stop}
Hard-stopping may not be feasible without the null arm, and is not meaningful when one has covering constraints (since they are usually not satisfied initially). Further background and references on \BwK can be found in \cref{sec:related}.}
Contextual bandits with knapsacks (\CBwK), a common generalization of contextual bandits and \BwK, has also been explored in prior work.



\asedit{In contextual bandits, even without resources, one typically specifies some additional structure. This is necessary for tractability, both statistical and computational, when one has a large number of possible contexts (as is the case in many/most applications). We adopt one standard approach}
which assumes access to \emph{regression oracle}, a subroutine for solving certain supervised regression problems \citep{regressionCB-icml18,regressionCB-icml20,regressionCB-bypassing}.
\asedit{A contextual bandit algorithm calls a regression oracle to approximate the observed rewards/losses as a function of the corresponding context-arm pairs; this function is used to predict rewards/losses in the future.}%
\footnote{\label{fn:reg-oracle}%
\asedit{Formally, a regression oracle returns a \emph{regression function} that maps context-arm pairs to real values and belongs to some predetermined (and typically simple) function class.}}
This approach is computationally efficient, allows for strong provable guarantees, and tends to be superior in experiments compared to other approaches (see \cref{sec:related}).



\xhdr{Our contributions.}
We design the first algorithm for \CBwLC with regression oracles (in fact, this constitutes the first such algorithm for \CBwK). To handle contexts via the regression-oracle approach, we build on the \SqCB algorithm from \citet*{regressionCB-icml20}. \asedit{\SqCB estimates actions' rewards using the regression function and converts them}
into a distribution over actions that optimally balances exploration and exploitation. To handle resource constraints, we build on the \LagBwK framework of \citet*{AdvBwK-focs19-conf,AdvBwK-focs19}. \LagBwK solves the simpler problem of \BwK \asedit{by setting up} a repeated zero-sum game \asedit{between two bandit algorithms:} the ``primal" algorithm which chooses among arms and the ``dual" algorithm which chooses among resources. The payoffs in this game are given by a natural Lagriangian relaxation \asedit{of the original constrained problem. Note that each of the two algorithms solves a bandit problem without resource constraints (but the payoff distribution changes over time, as it is driven by the other algorithm).}

We make three technical contributions. First, we develop \myAlg, an extension of the \LagBwK framework from \BwK with hard-stopping to \CBwLC. \asedit{The main challenge is to bound constraint violations without hard-stopping (which trivially prevents them). This} necessitates a subtle change in the algorithm (a re-weighting of the Lagrangian payoffs) and some new tricks in the analysis.
\asedit{The framework does not specify a particular primal algorithm, but instead assumes that it satisfies a certain regret bound.
Second, we design a suitable primal algorithm that handles contexts via a regression oracle. This algorithm builds on the \SqCB technique and we formally interpret it as an instance of \SqCB for a suitably defined contextual bandit problem.}
Third, we extend our guarantees beyond stochastic environments, allowing for a bounded number of ``switches" from one stochastic environment to another
(henceforth, the \emph{switching environment}).

We measure performance in terms of 1) regret relative to the best algorithm, and 2) maximum violation of each constraint at time $T$. We bound the maximum of these quantities, henceforth called \emph{outcome-regret}. Our main result attains the (optimal) $\tO(\sqrt{T})$ outcome-regret bound whenever $B>\Omega(T)$ and a minor non-degeneracy assumption holds.
We also attain outcome-regret $\tO(T^{3/4})$ for general \CBwLC problems. We emphasize that these are the first regret bound for \CBwLC with regression oracles. We also obtain $\tO(\sqrt{T})$ regret for contextual \BwK with hard-stopping.


\asmargincomment{Refactored this para}
Our proof leverages the inherent \emph{modularity} of the techniques. A key conceptual contribution here is to identify the pieces and connecting them to one another. In particular,  \myAlg permits the use of any application-specific primal bandit algorithm with a particular regret guarantee,%
\footnote{\asedit{The \LagBwK framework permits a similar modularity for \BwK, and} \citet{AdvBwK-focs19-conf,AdvBwK-focs19} and \citet{Castiglioni-icml22} use this modularity to derive several extensions.}
and our \SqCB-based primal algorithm satisfies this guarantee when it has access to a suitable regression oracle.
%
We provide two ``theoretical interfaces" for \myAlg that a primal algorithm can plug into,
depending on whether the non-degeneracy assumption holds.
We incorporate the original analysis of \SqCB as a theorem which we invoke when analyzing our primal algorithm.
This theorem requires a regression oracle with a particular guarantee on the squared regression error; prior work on regression provides such oracles under various conditions.
This is how our analysis for the stochastic environment comes together.
We then re-use this whole machinery for the analysis of the  switching environment.

\xhdr{Special cases.}
The \myAlg framework is of independent interest for even for the simpler problem of \CBwLC without contexts (henceforth, \BwLC). This is due to two extensions which appear new even without contexts: to the switching environment and to convex optimization (where rewards and resource consumption are convex/concave functions of an arm). \asedit{However, the basic version of \BwLC (\ie stochastic environment without additional structure) was already solved in prior work \citep{AgrawalDevanur-ec14,AgrawalDevanur-ec14-OpRe}, achieving optimal outcome-regret.}



\asedit{Our result for \CBwK with hard-stopping builds on \LagBwK (as a special case of \myAlg). Again, our analysis is modular: we encapsulate prior work on \LagBwK as a theorem that our primal algorithm plugs into.}

Our result for the switching environment is new even for \BwK, \ie when one only has packing constraints and no contexts. This result builds on our analysis for \myAlg: crucially, the algorithm continues till round $T$. Prior analyses of \LagBwK with hard-stopping do not appear to suffice. We obtain regret bounds relative to a non-standard, yet well-motivated benchmark, bypassing strong impossibility results from prior work on Adversarial \BwK (see \cref{sec:related}).



\subsection{Additional background and related work}
\label{sec:related}

Contextual bandits and \BwK generalize (stochastic) multi-armed bandits, \ie the special case without contexts or resource constraints. Further background on bandit algorithms can be found in books \citep{Bubeck-survey12,LS19bandit-book,slivkins-MABbook}.

\xhdr{Contextual bandits (\CB).}
While various versions of the contextual bandit problem have been studied over the past three decades, most relevant are the approaches based on computational oracles.
We focus on \CB with regression oracles, a promising emerging paradigm
\citep{regressionCB-icml18,regressionCB-icml20,regressionCB-bypassing}. \CB with classification oracles is an earlier approach, studied in \citet{Langford-nips07} and follow-up work, \eg \citet{policy_elim,monster-icml14}.
\footnote{A \emph{classification oracle} solves a different problem compared to a regression oracle: it is a subroutine for computing an optimal policy (mapping from contexts to arms) within a given class of policies.}

Contextual bandits with regression oracles are practical to implement, and can leverage the fact that regression algorithms are common in practice. In addition, \CB with regression oracles tend to have superior statistical performance compared to \CB with classification oracles, as reported in extensive real-data experiments \citep{regressionCB-icml18,Foster-colt21,practicalCB-arxiv18}.

\CB with regression oracles are desirable from a theoretical perspective, as they admit \emph{unconditionally} efficient algorithms for various standard function classes under realizability.%
\footnote{\Ie assuming that a given class of regression functions contains one that correctly describes the problem instance.} In contrast, statistically optimal guarantees for \CB with classification oracles are only computationally efficient conditionally. Specifically, one needs to assume that the oracle is an exact optimizer for all possible datasets, even though this is typically an NP-hard problem. This assumption is needed even if the CB algorithm is run on an instance that satisfies realizability.

\emph{Linear} \CB \citep{Langford-www10,Reyzin-aistats11-linear,Csaba-nips11}, a well-studied special case of the regression-based approach to \CB, posits realizability for linear regression functions. Analyses tend to focus on the high-confidence region around regression-based estimates. This variant is less relevant to our paper.

\xhdr{Bandits with Knapsacks (\BwK)} are more challenging compared to stochastic bandits for two reasons. First, instead of \emph{per-round} expected reward one needs to think about the \emph{total} expected reward over the entire time horizon, taking into account the resource consumption. Moreover, instead of the best arm one is interested in the best fixed \emph{distribution} over arms, which can perform much better. Both challenges arise in the ``basic" special case when one has only two arms and only one resource other than the time itself.

The \BwK problem was introduced and optimally solved in \citet{BwK-focs13-conf,BwK-focs13}, achieving $\tO(\sqrt{KT})$ regret for $K$ arms when budgets are $B_i=\Omega(T)$.
\citet{AgrawalDevanur-ec14,AgrawalDevanur-ec14-OpRe} and \citet{AdvBwK-focs19-conf,AdvBwK-focs19} provide alternative regret-optimal algorithms. In particular, the algorithm in \citet{AgrawalDevanur-ec14,AgrawalDevanur-ec14-OpRe}, which we refer to as \UcbBwK, implements the paradigm of \emph{optimism in the face of uncertainty}. Most work on \BwK posits hard-stopping (as defined earlier). A detailed survey of \BwK and its extensions can be found in \citet[Ch.11,][]{slivkins-MABbook}.


The contextual version of \BwK (\CBwK) was first studied in \citet{cBwK-colt14}. They consider \CBwK with classification oracles, and obtain an algorithm that is regret-optimal but not computationally efficient. \citet{CBwK-colt16} provide a regret-optimal and oracle-efficient algorithm for the same problem, which combines \UcbBwK and with the oracle-efficient contextual bandit method \citet{monster-icml14}.
\citet{CBwK-nips16} provide a regression-based approach for the special case of linear \CBwK, combining \UcbBwK and the optimistic approach for linear contextual bandits \citep{Langford-www10,Reyzin-aistats11-linear,Csaba-nips11}. Other regression-based methods for contextual \BwK have not been studied.


Many special cases of \CBwK have been studied for their own sake, most notably dynamic pricing
\citep[\eg][]{BZ09,DynPricing-ec12,Wang-OR14} and online bidding under budget \citep[\eg][]{BalseiroGur19,Balseiro-BestOfMany-Opre,Gaitonde-itcs23}. For the latter, \citet{Gaitonde-itcs23} achieve vanishing regret against a benchmark similar to ours. 

\xhdr{\CBwLC beyond (contextual) \BwK.}
\citet{AgrawalDevanur-ec14,AgrawalDevanur-ec14-OpRe} solve \CBwLC without contexts (\BwLC), building on \UcbBwK and achieving regret $\tO(\sqrt{KT})$. In fact, their result extends to arbitrary  convex constraints and \citep[in][]{CBwK-colt16} to \CBwK with classification oracles. However, their technique does not appear to connect well with regression oracles.

More recently,  \citet{efroni2020exploration,ding2021provably,zhou2022kernelized,ghosh2022provably}
studied various extensions of \BwLC, essentially following \LagBwK framework. They build on the same tools from constrained convex optimization as we do (\eg  Corollary~\ref{cor:Vmax}) in order to bound the constraint violations. However,  they use specific primal and dual algorithms, and their analyses are tailored to these algorithms.%
\footnote{The primal algorithms are based on ``optimistic'' bonuses, and the dual algorithms are based on gradient descent; \citet{zhou2022kernelized} also consider a primal algorithm based on Thompson Sampling.}
In contrast, our meta-theorem allows for arbitrary plug-in algorithms with suitable regret guarantees. \asedit{Moreover, these papers only handle the ``nice" case with Slater's constraint, whereas we also handle the general case. Finally,} the regret bounds in these papers are suboptimal for large $d$, the number of constraints, scaling as $\sqrt{d}$ rather than $\sqrt{\log d}$, even in the non-contextual case.

A notable special case involving covering constraints is online bidding under  return-on-investment constraint
\citep[\eg][]{Balseiro-BestOfMany-Opre,roi-golrezaei2021auction,golrezaei2021bidding}.

The version of \BwK that allows negative resource consumption has not been widely studied. A very recent algorithm in \citet{BwK-drift-neurips22} admits a regret bound that depends on several instance-dependent parameters, but no worst-case regret bound is provided.

\xhdr{Adversarial \BwK.}
The adversarial version of \BwK, introduced in \citet{AdvBwK-focs19-conf,AdvBwK-focs19}, is even more challenging compared to the stochastic version due to the \emph{spend-or-save dilemma}: essentially, the algorithm does not know whether to spend its budget now or to save it for the future. The algorithms are doomed to approximation ratios against standard benchmarks, as opposed to vanishing regret, even for a switching environment with just a single switch \citep{AdvBwK-focs19}. The approximation-ratio version is by now well-understood
\citep{AdvBwK-focs19-conf,AdvBwK-focs19,Singla-colt20,Castiglioni-icml22,Fikioris-BwK23}.%
\footnote{\citet{Fikioris-BwK23} is concurrent and independent work with respect to ours.}
Interestingly, all algorithms in these papers build on versions of \LagBwK. On the other hand, obtaining vanishing regret against some reasonable-but-weaker benchmark (such as ours) is articulated as a major open question \citep{AdvBwK-focs19}. We are not aware of any such results in prior work.

\asedit{\citet{BwK-Liu-neurips22} achieves a vanishing-regret result for Adversarial \BwK against a standard benchmark when one has bounded pathlength and total variation.%
\footnote{In fact, they consider the strongest possible standard benchmark: 
the optimal dynamic policy. The pathlength (or an upper bound thereon) must be known to the algorithm.}
This result is incomparable to our results for the switching environment, which are parameterized by the (unknown) number of switches and holds against a non-standard benchmark. Moreover, our approach extends to contextual bandits with regression oracles, whereas theirs does not.} 



\xhdr{Large vs. small budgets.}
Our guarantees are most meaningful in the regime of ``large budgets", where
$B:=\min_{i\in[d]} B_i>\Omega(T)$.
This is the main regime of interest in all prior work on \BwK and its special cases.%
\footnote{In particular, $B>\Omega(T)$ is explicitly assumed in, \eg   \citet{BZ09,Wang-OR14,BalseiroGur19,Castiglioni-icml22,Gaitonde-itcs23}.} That said, our guarantees are non-trivial even if $B = o(T)$.

The small-budget regime, $B=o(T)$, has been studied since
\citet{DynPricing-ec12}. In particular, \citep{BwK-focs13-conf,BwK-focs13} derive optimal upper/lower regret bounds in this regime for \BwK with hard-stopping. The respective \emph{lower} bounds are specific to hard-stopping, and do not directly apply when a \BwK algorithm can continue till round $T$.




\subsection{Concurrent work}
\citet{SquareCBwK-competition}
focus on \CBwK with hard-stopping in the stochastic environment and obtain a result similar to \cref{cor:CB}(c), also using an algorithm based on \LagBwK and \SqCB. The main technical difference
is that they do not explicitly express their algorithm as an instantiation of \LagBwK, and accordingly do not take advantage of its modularity. Their treatment does not extend to the full generality of \CBwLC, and does not address the switching environment. We emphasize that our results are simultaneous and independent with respect to theirs.

%
%
%

\subsection{Organization}
\cref{sec:prelims} introduces the \CBwLC problem. \cref{sec:alg,sec:Lag} provide our Lagrangian framework (\myAlg) and the associated modular guarantees for this framework.
The material specific to regression oracles is encapsulated in
\cref{sec:CB}, including the setup and the \SqCB-based primal
algorithm. \cref{sec:shifts} extends our results to the switching
environment, defining a novel benchmark and building on the machinery from the previous sections.

\section{Model and preliminaries}
\label{sec:prelims}

\vspace{-2mm}\xhdr{Contextual Bandits with Linear Constraints (\CBwLC).}
There are $K\geq 2$ arms, $T\geq 2$ rounds, and $d\geq 1$ \emph{resources}. We use $[K]$, $[T]$, and $[d]$ to denote, respectively, the sets of all arms, rounds, and resources.%
\footnote{Throughout, $[n]$, $n\in\N$ stands for the set $\{ 1,2 \LDOTS n\}$.}
In each round $t\in[T]$, an algorithm
observes a context $x_t\in\mX$ from a set $\mX$ of possible contexts, chooses an arm $a_t\in[K]$, receives a reward $r_t\in [0,1]$, and consumes some amount $c_{t,i}\in [-1,+1]$ of each resource $i$. Consumptions are observed by the algorithm, so that the outcome of choosing an arm is the \emph{outcome vector}
$\vo_t = (r_t; c_{t,1} \LDOTS c_{t,d})\in [0,1]\times [-1,+1]^d$.
For each resource $i\in[d]$ we are required to (approximately) satisfy the constraint
\begin{align}\label{eq:prelims-constraint}
\viol_i(T) := \textstyle \sigma_i\rbr{\sum_{t\in[T]} c_{t,i} - B_i} \leq 0,
\end{align}
where $B_i\in [0,T]$ is the budget and
    $\sigma_i\in\cbr{-1,+1}$
    is the \emph{constraint sign}. Here $\sigma_i = 1$ (resp., $\sigma_i = -1$) corresponds to a packing (resp., covering) constraint, which requires that the total consumption never exceeds (resp., never falls below) $B_i$.
Informally, the goal is to minimize both regret (on the total reward) and the constraint violations $\viol_i(T)$.
\footnote{A similar bi-objective approach is taken in \citet{AgrawalDevanur-ec14,AgrawalDevanur-ec14-OpRe}  and \citet{CBwK-colt16}.}



We define counterfactual outcomes as follows. The \emph{outcome matrix}
    $\vM_t \in [-1,1]^{K\times (d+1)}$
is chosen in each round $t\in [T]$, so that its rows $\vM_t(a)$ correspond to arms $a\in[K]$ and the outcome vector is defined as
    $\vo_t= \vM_t(a_t)$.
Thus, the row $\vM_t(a)$ represents the outcome the algorithm \emph{would have} observed in round $t$ if it has chosen arm $a$.

We focus on \emph{Stochastic} \CBwLC throughout the paper unless stated otherwise: in each round $t$, the pair $(x_t,\vM_t)$ is drawn independently from some fixed distribution $\outD$. In \cref{sec:shifts} we consider a generalization in which the distribution $\outD$ can change over time.

The special case of \CBwLC without contexts (equivalently, with only one possible context, $|\mX|=1$) is called \emph{bandits with linear constraints} (\BwLC). We also refer to it as the non-contextual problem.


\begin{remark}
Rewards and resource consumptions can be mutually correlated. This is essential in most motivating examples of \BwK, \eg  \cite{BwK-focs13} and \citep[Ch. 10]{slivkins-MABbook}.
\end{remark}

\begin{remark}
We assume i.i.d. context arrivals. While many analyses in contextual bandits seamlessly carry over to adversarial chosen  context arrivals, this is not the case for our problem.%
\footnote{\label{fn:dilemma}
Indeed, with adversarial context arrivals algorithms cannot achieve sublinear regret, and instead are doomed to a constant approximation ratio. To see this, focus on \CBwK and consider a version of the ``spend or save" dilemma from \cref{sec:related}. There are three types of contexts which always yield, resp., high, low, and medium rewards. The contexts are ``medium" in the first $T/2$ rounds, and either all ``high" or all ``low" afterwards. The algorithm would not know whether to spend all its budget in the first half, or save it for the second half.}
In particular, i.i.d. context arrivals are needed to make the linear program \eqref{eq:LP} well-defined.
\end{remark}

\begin{remark}
\BwLC differs from Bandits with Knapsacks (\BwK) in several ways. First, \BwK only allows packing constraints ($\sigma_i\equiv 1$), whereas \BwLC also allows covering constraints ($\sigma_i=-1$). Second, we allow resource consumption to be both positive and negative, whereas on \BwK it must be non-negative. Third, \BwK assumes that some arm in $\brk{K}$ is a ``null arm": an arm with zero reward and consumption of each resource,%
\footnote{Existence of a ``null arm" is equivalent to the algorithm being able to skip rounds.}
whereas \BwLC does not. Moreover, most prior work on \BwK posits \emph{hard-stopping}: the algorithm must stop --- in our terms, permanently  switch to the null arm --- as soon as one of the constraints is violated.
\end{remark}





Let $B = \min_{i\in[d]} B_i$ be the smallest budget. Without loss of generality, we rescale the problem so that all budgets are $B$: we divide the per-round consumption of each resource $i$ by $B_i/B$. 


Without loss of generality, we assume that one of the resources is the \emph{time resource}: it is deterministically consumed by each action at the rate of $B/T$, with a packing constraint $(\sigma_i=1)$.

Formally, an instance of \CBwLC is specified by parameters $T,B,K,d$, constraint signs
    $\sigma_1 \LDOTS \sigma_d$,
and outcome distribution $\outD$. Our benchmark is the best algorithm for a given problem instance: \begin{align}\label{eq:prelims-opt}
\OPT := \sup_{\text{algorithms \ALG with $\E[V_i(T)]\leq 0$ for all resources $i$}}\quad \E[\rew(\ALG)],
\end{align}
where $\rew(\ALG) = \sum_{t\in[T]} r_t$ is the algorithm's total reward (we write $\rew$ when the algorithm is clear from the context). The goal is to minimize algorithm's \emph{regret}, defined as
    $\OPT - \rew(\ALG)$,
as well as constraint violations $\viol_i(T)$. For most lucid results we upper-bound the maximum of these quantities, 
$\regOut := \max_{i\in[d]}\rbr{\OPT - \rew(\ALG),\;\viol_i(T)}$,
called the \emph{outcome-regret}.




\xhdr{Additional notation.}
For the round-$t$ outcome matrix $\vM_t$, the row for arm $a\in[K]$ is denoted
    \[ \vM_t(a) = (r_t(a); \, c_{t,1}(a) \LDOTS c_{t,d}(a)),\]
so that $r_t(a)$ is the reward and $c_{t,i}(a)$ is the consumption of each resource $i$ if arm $a$ is chosen.

Expected reward and resource-$i$ consumption for a given context-arm pair $(x,a)\in\mX\times [K]$ is
    $r(x,a) = \E\sbr{ r_t(a) \mid x_t=x}$
and
    $c_i(x,a) = \E\sbr{ c_{t,i}(a) \mid x_t=x}$,
 where the expectation is over the marginal distribution of $\vM_t$ conditional on $x_t=x$.

A \emph{policy} is a deterministic mapping from contexts to arms. The set of all policies is denoted $\allPi$.  Without loss of generality, we assume that the following happens in each round $t$: the algorithm deterministically chooses some distribution $D_t$ over policies, draws a policy $\pi_t\sim D_t$ independently at random, observes context $x_t$ (\emph{after} choosing $D_t$), and chooses an arm $a_t = \pi_t(x_t)$. The \emph{history} of the first $t$ rounds is $\mH_t := \rbr{D_s,\, \pi_s,\, x_s,\, a_s,\, r_s}_{s\in[t]}$.

Consider a distribution $D$ over policies. Suppose this distribution is ``played" in some round $t$, \ie a policy $\pi_t$ is drawn independently from $D$, and then an arm is chosen as $a_t = \pi_t(x_t)$. The expected reward and resource-$i$ consumption for $D$ are denoted
\begin{align}
r_t(D) := \E_{\pi\sim D}\sbr{ r_t(\pi(x_t))}
    &\text{ and }
c_{t,i}(D) := \E_{\pi\sim D}\sbr{c_{t,i}(\pi(x_t))}
    & \qquad \EqComment{given $(x_t,\vM_t)$}
    \label{eq:prelims-exp-t}\\
r(D) := \E\sbr{ r_t(\pi(x_t))}
    &\text{ and }
c_i(D) := \E\sbr{c_{t,i}(\pi(x_t))},
    \label{eq:prelims-exp}
\end{align}
where the expectation is over both policies $\pi\sim D$ and context-matrix pairs  $(x_t,\vM_t)\sim\outD$. Note that
    $\E\sbr{r_t(a_t) \mid \mH_{t-1}} = r(D_t)$
and likewise
    $\E\sbr{c_{t,i}(a_t) \mid \mH_{t-1}} = c_i(D_t)$.
If distribution $D$ is played in all rounds $t\in[T]$, the expected constraint-$i$ violation is denoted as
    $V_i(D) := \sigma_i\rbr{T\cdot c_i(D)-B}$.

A policy can be interpreted as a singleton distribution that chooses this policy almost surely, and an arm can be interpreted as a policy that always chooses this arm. So, the notation in \cref{eq:prelims-exp-t,eq:prelims-exp} can be overloaded naturally to input a policy, an arm, or a distribution over arms.%
\footnote{For the non-contextual problem, for example, $r(a)$ is the expected reward of arm $a$, and $r(D) = \E_{a\in D}\sbr{r(a)}$ is the expected reward for a distribution $D$ over arms.}



%
%
%
%
%
%

Let $\Delta_S$ denote the set of all distributions over set $S$. We write $\Delta_n = \Delta_{[n]}$ for $n\in\N$ as a shorthand. We identify $\Delta_K$ (resp., $\Delta_d$) with the set of all distributions over arms (resp., resources).




\xhdr{Linear relaxation.}
We use a standard linear relaxation of \CBwLC, which optimizes over distributions over policies,
    $D \in \Delta_{\allPi}$,
maximizing the expected reward $r(D)$ subject to the constraints:
\begin{equation}
\label{eq:LP}
\begin{array}{ll}
\text{maximize}
    &r(D)\\
\text{subject to} &D \in \Delta_{\allPi}  \\
    & V_i(D) := \sigma_i\rbr{ T\cdot c_i(D) - B}\leq 0 \qquad \forall i \in [d].
\end{array}
\end{equation}
\noindent The value of this linear program is denoted $\OPTLP$. It is easy to see that
    $T\cdot \OPTLP= \OPT$.
\footnote{Indeed, to see that $T\cdot \OPTLP\geq \OPT$, consider any algorithm in the supremum in \refeq{eq:prelims-opt}. Let $D_\pi$ be the expected fraction of rounds in which a given policy $\pi\in\allPi$ is chosen. Then distribution $D \in\Delta_{\allPi}$ satisfies the constraints in the LP. For the other direction, consider an LP-optimizing distribution $D$ and observe that using this distribution in each round constitutes a feasible algorithm for the benchmark \refeq{eq:prelims-opt}.}

The Lagrange function associated with the linear program \eqref{eq:LP} is defined as follows:
\begin{align}\label{eq:Lag}
\textstyle
\LagLP(D, \lambda)
:=  r(D)\;+\;
    \sum_{i\in [d]}\; \sigma_i\cdot\lambda_i  \rbr{1-\tfrac{T}{B}\;  c_i(D) },
    \quad D\in\Delta_{\allPi},\, \lambda\in \R_+^d.
\end{align}
A standard result concerning \emph{Lagrange duality} states that the maximin value of $\LagLP$ coincides with $\OPTLP$. For this result, $D$ ranges over all distributions over policies and $\lambda$ ranges over all of $\R_+^d$:
\begin{align}\label{eq:Lag-duality-gen}
\OPTLP
    = \sup_{D\in\Delta_{\allPi}} \inf_{\lambda \in\R_+^d} \LagLP(D,\lambda).
\end{align}



\section{Lagrangian framework for \CBwLC}
\label{sec:alg}

We provide a new algorithm design framework, \myAlg, which generalizes the \LagBwK framework from \citet{AdvBwK-focs19-conf,AdvBwK-focs19}. We consider a repeated zero-sum game between two algorithms: a \emph{primal algorithm} \PrimalALG that chooses arms $a\in[K]$, and a \emph{dual algorithm} \DualALG that chooses distributions $\lambda\in\Delta_d$ over resources;%
\footnote{The terms `primal' and `dual' here refer to the duality in linear programming. For the LP-relaxation \eqref{eq:LP}, primal variables correspond to arms, and dual variables (\ie variables in the dual LP) correspond to resources.} \DualALG goes first, and \PrimalALG can react to the chosen $\lambda$. The round-$t$ payoff (\emph{reward} for \PrimalALG, and \emph{cost} for \DualALG) is defined as
\begin{align}\label{eq:Lag-t}
\textstyle
\Lag_t(a,\lambda) = r_t(a) \;+\;
        \eta\cdot \sum_{i\in[d]} \sigma_i\cdot\lambda_i  \rbr{1 - \tfrac{T}{B}\, c_{t,i}(a)}.
\end{align}
Here, $\eta\geq 1$ is a parameter specified later. For a distribution over policies, $D\in\Delta_{\allPi}$,  denote
    $\Lag_t(D,\lambda) = \E_{\pi\sim D}\sbr{\Lag_t(\pi(x_t),\,\lambda)}$.
The purpose of the definition \refeq{eq:Lag-t} is to ensure that
\begin{align}\label{eq:Lag-exp}
\E\sbr{ \Lag_t(D,\lambda)} = \LagLP(D,\eta\cdot \lambda),
\end{align}
where the expectation is over the context $x_t$ and the outcome $\vo_t$.  The repeated game is summarized in \cref{alg:LagBwK}.



\begin{algorithm}[!h]
\Given{$K$ arms, $d$ resources, and ratio $T/B$, as per the problem definition;\\
    parameter $\eta\geq 1$; algorithms \PrimalALG, \DualALG.}
\For{rounds $t\in [T]$}{
Dual algorithm \DualALG outputs a distribution $\lambda_t\in \Delta_d$ over resources.\\
Primal algorithm \PrimalALG receives $(x_t,\lambda_t)$ and outputs an arm $a_t\in [K]$. \\
Arm $a_t$ is played and outcome vector $\vo_t$ is observed (and passed to both algorithms).\\
Lagrange payoff $\Lag_t(a_t,\lambda_t)$ is computed as per \refeq{eq:Lag-t},\\
\algTAB and reported to \PrimalALG as reward and \DualALG as cost.
} 
\caption{\myAlg framework}
\label{alg:LagBwK}
\end{algorithm}

\begin{remark}
Beyond incorporating contexts, the main change compared to \LagBwK \citep{AdvBwK-focs19-conf,AdvBwK-focs19} is that we scale the constraint terms in the Lagrangian by the parameter $\eta\geq 1$. This parameter is the ``lever" that allows us as to extend the algorithm from \BwK to \BwLC, accommodating general constraints. This modification effectively rescale the dual vectors from distributions $\lambda\in\Delta_d$ to vectors $\eta\cdot\lambda\in\bbR^d_+$. An equivalent reformulation of the algorithm could instead rescale all \emph{rewards} to lie in the interval $[0,\nicefrac{1}{\eta}]$. This reformulation is instructive because the scale of rewards can be arbitrary as far as the original problem is concerned, but it leads to some notational difficulties in the analysis, which is why we did not choose it for presentation.
Interestingly, setting $\eta=1$, like in \citep{AdvBwK-focs19-conf,AdvBwK-focs19}, does not appear to suffice even for \BwK if hard-stopping is not allowed (i.e., \cref{alg:LagBwK} must continue as defined till round $T$).

Lastly, we mention two further changes compared to \LagBwK: we allow \PrimalALG to respond to the chosen $\lambda_t$, which is crucial to handle contexts in Section~\ref{sec:CB}, and we rescale the time consumption in \cref{thm:main-rootT}, which allows for improved regret bounds.
\end{remark}

\begin{remark}
A version of \LagBwK with parameter $\eta = \nicefrac{T}{B}$ was recently used in \citet{Castiglioni-icml22}. Their analysis is specialized to \BwK and targets (improved) approximation ratios for the adversarial version. An important technical difference is that their algorithm does not make use of the \emph{time resource}, a dedicated resource that track the time consumption.
\end{remark}

\begin{remark}
  In \myAlg, the dual algorithm \DualALG receives \emph{full feedback} on its Lagrange costs: indeed, the outcome vector $\vo_t$ allows \cref{alg:LagBwK} to reconstruct
    $\Lag_t(a_t,i)$
for each resource $i\in[d]$.
\DualALG could also receive the context $x_t$, but our analysis does not make use of this.
\end{remark}

The intuition behind \myAlg is as follows. If \PrimalALG and \DualALG satisfy certain regret-minimizing properties, the repeated game converges to a Nash equilibrium for the rescaled Lagrangian $\LagLP(D,\eta\cdot \lambda)$.
The specific definition \eqref{eq:Lag-t}, for an appropriate choice of $\eta$, ensures that  the strategy of \PrimalALG in the Nash equilibrium is (near-)optimal for the problem instance by a suitable version of Lagrange duality. For \BwK with hard-stopping problem, $\eta=1$ suffices,%
\footnote{Because
    $\OPTLP = \sup_{D\in\Delta_K} \inf_{\lambda\in\Delta_d} \LagLP(D,\lambda)$
when $\sigma_i\equiv 1$ and there is a null arm \citep{AdvBwK-focs19-conf,AdvBwK-focs19}.}
but for general instances of \BwLC we choose $\eta>1$ in a fashion that depends on the problem instance.

\xhdr{Primal/dual regret.}
We provide general guarantees for \myAlg when invoked with arbitrary primal and dual algorithms \PrimalALG and \DualALG satisfying suitable regret bounds. We define the \emph{primal problem} (resp., \emph{dual problem}) as the online learning problem faced by \PrimalALG (resp., \DualALG) from the perspective of the repeated game in \myAlg. The primal problem is a bandit problem where algorithm's action set is the set of all arms, and the Lagrange payoffs are rewards. The dual problem is a full-feedback online learning problem where algorithm's ``actions" are the resources in \CBwLC, with Lagrange payoffs are costs. The \emph{primal regret} (resp., \emph{dual regret}) is the regret relative to the best-in-hindsight action in the respective problem. Formally, these quantities are as follows:
\begin{align}
\PrimalREG(T) &:= \textstyle
    \sbr{\max_{\pi\in\allPi} \sum_{t\in[T]} \Lag_t(\pi(x_t),\lambda_t )}
    -
    \sum_{t\in[T]} \Lag_t(a_t,\lambda_t ). \nonumber \\
\DualREG(T) &:= \textstyle
    \sum_{t\in[T]} \Lag_t(a_t,\lambda_t )
    -
    \sbr{\min_{i\in[d]} \sum_{t\in[T]} \Lag_t(a_t,i)}.
    \label{eq:primal-dual-regret-defn}
\end{align}
We assume that the algorithms under consideration provide high-probability upper bounds on the primal and dual regret:
\begin{align}\label{eq:regret-assn}
\Pr\sbr{\forall \tau\in[T]\quad
    \PrimalREG(\tau) \leq \RegOne[\tau] \quad\text{and}\quad \DualREG(\tau) \leq \RegTwo[\tau]
} \geq  1-\delta,
\end{align}
where $\RegOne$ and $\RegTwo$ are functions, non-decreasing in $T$, and $\delta\in(0,1)$ is the failure probability. Our theorems use ``combined" regret bound $R(T,\delta)$ defined by
\begin{align}\label{eq:regret-joint}
\nicefrac{T}{B}\cdot\eta\cdot R(T,\delta)
    := \RegOne+\RegTwo+ 2\,\ConcREG(T, \delta),
\end{align}
where $\ConcREG(T, \delta) = O\rbr{\asedit{\nicefrac{T}{B}\cdot\eta}\cdot\sqrt{T\log (dT/\delta)}}$ accounts for concentration.

\begin{remark}\label{rem:standard}
The range of Lagrange payoffs is proportional to $\nicefrac{T}{B}\cdot\eta$, which is why we separate out this factor on the left-hand side of \refeq{eq:regret-joint}. For the non-contextual version with $K$ arms, standard results yield
    $R(T,\delta) =  O(\sqrt{KT\log (dT/\delta)}\;)$.%
\footnote{Using algorithms EXP3.P \citep{bandits-exp3} for \PrimalALG and Hedge \citep{FS97} for  \DualALG, we obtain \refeq{eq:regret-assn} with
$\RegOne = O(\;\nicefrac{T}{B}\cdot\eta\cdot\sqrt{KT \log(K/\delta)}\;)$
and
$\RegTwo = O(\;\nicefrac{T}{B}\cdot\eta\cdot\sqrt{T \log d}\;)$.}
Several other applications of \LagBwK framework (and, by extension, of \LagCBwLC) are discussed in \citep{AdvBwK-focs19,Castiglioni-icml22}. In \cref{sec:CB}, we provide a new primal algorithm for \CBwLC with regression oracles.
Most applications, including ours, feature $\tO(\sqrt{T})$ scaling for $R(T,\delta)$.
\end{remark}

\begin{remark}
For our results, \eqref{eq:regret-joint} with $\tau=T$ suffices. We only use the full power of \refeq{eq:regret-joint} to incorporate the prior-work results on \CBwK with hard-stopping, \ie \cref{thm:main-BwK} and its corollaries.
\end{remark}

\xhdr{Our guarantees.}
 Our main guarantee for \LagCBwLC holds whenever some solution for the LP~\eqref{eq:LP} is feasible \emph{by a constant margin}. Formally, a distribution $D\in\Delta_{\allPi}$ is called \emph{$\zeta$-feasible}, $\zeta\in [0,1)$  if for each non-time resource $i\in[d]$ it satisfies
    $\sigma_i\rbr{\tfrac{T}{B}\, c_i(D)-1} \leq -\zeta$,
and we need some $D$ to be $\zeta$-feasible with $\zeta>0$.
This is a very common assumption in convex analysis, known as \emph{Slater's condition}. It holds without loss of generality when all constraints are packing constraints ($\sigma_i\equiv 1$) and there is a null arm (\ie it is feasible to do nothing); it is a mild ``non-degeneracy" assumption in the general case of \BwLC. We obtain, essentially, the best possible guarantee when Slater's condition holds and $B>\Omega(T)$. We also obtain a non-trivial (but weaker) guarantee when $\zeta=0$, \ie we are only guaranteed \emph{some} feasible solution. Importantly, the $\zeta$-feasible solution is only needed for the analysis: our algorithm does not need to know it (but it does need to know $\zeta$).


\begin{theorem}\label{thm:main}
Suppose some solution for LP~\eqref{eq:LP} is $\zeta$-feasible, for a known margin $\zeta\geq 0$. Fix some $\delta>0$ and consider the setup in \cref{eq:primal-dual-regret-defn,eq:regret-assn,eq:regret-joint}
with ``combined" regret bound $R(T,\delta)$.

\begin{itemize}
\item[(a)] \label{thm:main-rootT}
If $\zeta>0$, consider \myAlg with parameter $\eta = 2/\zeta$. With probability at least $1-O(\delta)$,
\begin{align} 
\max_{i\in[d]}\rbr{\OPT-\rew,\; V_i(T)}
    \leq O\rbr{ \nicefrac{T}{B}
                    \cdot\nicefrac{1}{\zeta}\cdot R(T,\delta) }.
\end{align}

\item[(b)]
Consider \myAlg with parameter $\eta=\frac{B}{T}\,\sqrt{\frac{T}{R(T,\delta)}}$.
With probability at least $1-O(\delta)$,
\begin{align} \label{eq:thm:main-0}
\max_{i\in[d]}\rbr{\OPT-\rew,\; V_i(T)}
    \leq O\rbr{\sqrt{T\cdot R(T,\delta)}}.
\end{align}
\end{itemize}
\end{theorem}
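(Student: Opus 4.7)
The plan is to derive a single primal--dual ``sandwich'' inequality
\begin{align*}
(\OPT - \rew) + \eta\,\tfrac{T}{B}\,V^\ast \;\leq\; \PrimalREG(T)+\DualREG(T) + O(\ConcREG(T,\delta)) \;\leq\; \tfrac{T}{B}\,\eta\,R(T,\delta),
\end{align*}
where $V^\ast := \max_i V_i(T)$, and then read off both parts from it. For the lower bound on $\sum_t\Lag_t(a_t,\lambda_t)$, I would invoke the primal regret bound \eqref{eq:primal-dual-regret-defn} with the LP-optimal distribution $D^\ast\in\Delta_{\allPi}$ as comparator (the max over policies dominates the expectation under $D^\ast$). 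Since $D^\ast$ is LP-feasible, each term $\sigma_i\bigl(1-\tfrac{T}{B}c_i(D^\ast)\bigr)\geq 0$, so \eqref{eq:Lag-exp} combined with martingale (Azuma) concentration, conditioning on $\lambda_t$, yields $\sum_t\Lag_t(D^\ast,\lambda_t) \geq T\cdot\OPTLP - O(\ConcREG) = \OPT - O(\ConcREG)$. For the upper bound, use the exact identity $\sum_t\Lag_t(a_t,i) = \rew - \eta\tfrac{T}{B} V_i(T)$ and apply dual regret with $i=\arg\max_i V_i(T)$.

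Because the time resource is deterministic, $V_{\text{time}}(T)=0$, and so $V^\ast \geq 0$; dropping this non-negative term from the sandwich immediately gives $\OPT-\rew \leq \tfrac{T}{B}\,\eta\,R(T,\delta)$, which matches the stated regret bound in both parts upon substituting the corresponding $\eta$. For the violation bound in part~(b), rearrange the sandwich to $V^\ast \leq R(T,\delta) + (\rew-\OPT)\cdot \tfrac{B}{T\eta}$ and use the trivial $\rew-\OPT \leq T$: since $\tfrac{B}{T\eta}=\sqrt{R(T,\delta)/T}$ for this choice of $\eta$, this gives $V^\ast \leq R(T,\delta)+\sqrt{T\cdot R(T,\delta)}$, i.e. $O(\sqrt{T\cdot R(T,\delta)})$ in the meaningful regime $R(T,\delta)\leq T$.

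For the violation bound in part~(a), the crude $\rew-\OPT\leq T$ is too lossy; I would instead bootstrap via LP sensitivity. Slater's condition (the assumed $\zeta$-feasible $D^0$) yields the standard bound $\|\lambda^\ast\|_1 \leq 1/\zeta$ on the LP-dual optimum, and weak duality then gives $r(\bar D)\leq \OPTLP + \tfrac{1}{\zeta B}\max_i V_i(\bar D)^+$ for any $\bar D\in\Delta_{\allPi}$, in particular for the algorithm's average $\bar D=\tfrac{1}{T}\sum_t D_t$. After using concentration to identify $V_i(\bar D)$ with $V_i(T)$ and $Tr(\bar D)$ with $\rew$, this becomes $\rew-\OPT \leq \tfrac{T}{\zeta B}\,V^\ast + O(\ConcREG)$. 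Substituting back into the sandwich, the $V^\ast$ terms combine on both sides; the choice $\eta=2/\zeta$ leaves positive slack $\eta-1/\zeta = 1/\zeta$, so solving for $V^\ast$ yields $V^\ast \leq O(R(T,\delta))$, absorbed into the regret bound $O\bigl(\tfrac{T}{B}\,\tfrac{1}{\zeta}\,R(T,\delta)\bigr)$. The main obstacle is precisely this bootstrap step: it requires a sharp enough handle on $\|\lambda^\ast\|_1$ from Slater, together with the positive margin $\eta-1/\zeta$ afforded by $\eta=2/\zeta$, to close the argument; the remaining concentration bookkeeping is routine but must respect the adaptive nature of $D_t,\lambda_t$ via martingale arguments so that $\ConcREG$ remains strictly lower-order.
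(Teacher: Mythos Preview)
Your proposal is correct and follows essentially the same architecture as the paper: both derive a combined inequality of the form $(\OPT-\rew)+\eta\tfrac{T}{B}V^\ast \lesssim \eta\tfrac{T}{B}R(T,\delta)$ from primal and dual regret, read off the regret bound by dropping $V^\ast\geq 0$ (guaranteed by the time resource), and then for part~(a) invoke Slater's condition via the standard bound $\|\lambda^\ast\|_1\leq 1/\zeta$ to upper-bound $\rew-\OPT$ by $\tfrac{T}{\zeta B}V^\ast$ and close the bootstrap (this is exactly the paper's Lemma~4.3/Corollary~4.4). The only stylistic difference is that the paper routes through the approximate-saddle-point abstraction $(\bar D_T,\bar\lambda_T)$ for $\LagLPeta$ and then transfers $V_{\max}(\bar D_T)$ to $V_i(T)$ via concentration, whereas you work directly with realized quantities using the exact identity $\sum_t\Lag_t(a_t,i)=\rew-\eta\tfrac{T}{B}V_i(T)$, which is a mild streamlining but not a different idea.
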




The guarantee in part (a) is the best possible for \BwLC, in the regime when $B>\Omega(T)$ and $\zeta$ is a constant. To see this, consider non-contextual \BwLC with $K$ arms: we obtain  regret rate $\tO(\sqrt{KT})$ by Remark~\ref{rem:standard}. This regret rate is the best possible in the worst case, even without resource constraints, due to the lower bound in \citep{bandits-exp3}. However, our guarantee is suboptimal when $B=o(T)$, compared to $\tO(\sqrt{KT})$ outcome-regret achieved in \citet{AgrawalDevanur-ec14,AgrawalDevanur-ec14-OpRe} via a different approach.

To characterize the regret rate in part (b), consider the paradigmatic regime when
    $R(T,\delta) = \tO(\sqrt{\Psi\cdot T})$
for some parameter $\Psi$ that does not depend on $T$.
\footnote{Here and elsewhere, $\tO\rbr{\cdot}$ notation hides $\log(Kdt/\delta)$ factors when the left-hand side depends on both $T$ and $\delta$.}
Then the right-hand side of \refeq{eq:thm:main-0} becomes
    $\tO\rbr{ \Psi^{\nicefrac{1}{4}}\cdot T^{3/4}}$.

\begin{remark}\label{rem:zero}
When $B>\Omega(T)$ and the margin $\zeta>0$ is a constant, the guarantee in \cref{thm:main}(a) can be improved to zero constraint violation, with the same regret rate.%
\footnote{\asedit{Here, zero constraint violation happens with high probability rather than almost surely. Some recent work on the special case of online bidding under constraints
\citep[\eg][]{Feng-www23,Autobidding-colt24} 
achieves zero constraint violation almost surely
even when the algorithm must continue till time $T$.
}}
The algorithm is modified slightly, by rescaling the budget parameter $B$ and the consumption values $c_{t,i}$ passed to the algorithm, and one need to account for this rescaling in the analysis. See \cref{app:zero} for details.
\end{remark}

The two guarantees in \cref{thm:main} can be viewed as ``theoretical interfaces" to \LagCBwLC framework. We obtain them as special cases of a more general analysis (\cref{thm:main-detailed}), which is deferred to the next section. The main purpose of these guarantees is to enable applications to regression oracles (\cref{sec:CB}) and also to the switching environment (\cref{sec:shifts}, via \cref{thm:main-detailed}). An additional application --- to bandit convex optimization, which may be of independent interest --- is spelled out in \cref{app:BCO}.

For the last result in this section, we restate another ``theoretical interface", which concerns the simpler \CBwK problem and gives an $\nicefrac{T}{B}\cdot R(T,\delta)$ regret rate with parameter $\eta=1$ whenever hard-stopping is allowed.%
\footnote{Recall that under hard-stopping the algorithm effectively stops as soon as some constraint is violated, and therefore all constraint violations are bounded by $1$.}
We invoke this result in \cref{sec:CB} along with \cref{thm:main}.
\begin{theorem}[\citet{AdvBwK-focs19-conf,AdvBwK-focs19}]\label{thm:main-BwK}
Consider \CBwK with hard-stopping. Fix some $\delta>0$ and consider the setup in \cref{eq:primal-dual-regret-defn,eq:regret-assn,eq:regret-joint}. Consider algorithm \myAlg with parameter $\eta=1$. With probability at least $1-O(\delta)$, we have
$\OPT-\rew \leq  \nicefrac{T}{B}\cdot O(R(T,\delta))$.
\end{theorem}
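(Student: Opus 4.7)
The plan is to combine the primal-regret and dual-regret inequalities with LP duality, mirroring the standard \LagBwK analysis but crucially exploiting the designated time resource. Let $\tau\leq T$ denote the round at which hard-stopping is triggered (so $\rew=\sum_{t\leq\tau}r_t$), let $D^{*}\in\Delta_{\allPi}$ be any LP-optimal solution of \eqref{eq:LP}, and let $S_i:=\sum_{t\leq\tau}c_{t,i}(a_t)$ denote the total consumption of resource $i$. Note that $\OPTLP=\OPT/T\leq 1$ and, by LP-feasibility of $D^{*}$ (packing only), each $1-(T/B)c_i(D^{*})\geq 0$.

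The first step is to lower-bound $\sum_{t\leq\tau}\Lag_t(a_t,\lambda_t)$ using primal regret. The bound in \eqref{eq:regret-assn} together with $\max_\pi\sum_t\Lag_t(\pi(x_t),\lambda_t)\geq\mathbb{E}_{\pi\sim D^{*}}[\sum_t\Lag_t(\pi(x_t),\lambda_t)]=\sum_t\Lag_t(D^{*},\lambda_t)$ gives $\sum_{t\leq\tau}\Lag_t(a_t,\lambda_t)\geq\sum_{t\leq\tau}\Lag_t(D^{*},\lambda_t)-\RegOne[\tau]$. With $\eta=1$, identity \eqref{eq:Lag-exp} and $\lambda_t\in\Delta_d$ yield $\mathbb{E}[\Lag_t(D^{*},\lambda_t)\mid\mathcal{H}_{t-1}]=\LagLP(D^{*},\lambda_t)\geq r(D^{*})=\OPTLP$, so an Azuma-type concentration applied uniformly over the stopping time $\tau$ (this is the $\ConcREG$ term) gives $\sum_{t\leq\tau}\Lag_t(D^{*},\lambda_t)\geq\tau\cdot\OPTLP-\ConcREG(T,\delta)$ with probability at least $1-O(\delta)$.

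Second, I would upper-bound the same sum via dual regret. Expanding \eqref{eq:Lag-t} with $\eta=1$ and evaluating at the pure strategy $\lambda=e_i\in\Delta_d$ gives $\sum_{t\leq\tau}\Lag_t(a_t,i)=\rew+\tau-(T/B)S_i$, so \eqref{eq:regret-assn} yields $\sum_{t\leq\tau}\Lag_t(a_t,\lambda_t)\leq\rew+\tau-(T/B)\max_iS_i+\RegTwo[\tau]$. Chaining with the previous bound and collecting the three regret terms via \eqref{eq:regret-joint} produces the key inequality
\[
\tau\cdot\OPTLP+\tfrac{T}{B}\max_iS_i-\tau\;\leq\;\rew+\tfrac{T}{B}\cdot O(R(T,\delta)).
\]

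The clincher is the designated time resource: it is consumed at rate $B/T$ per round, so $\max_iS_i\geq S_{\text{time}}=\tau\cdot B/T$. When $\tau=T$ this forces $(T/B)\max_iS_i\geq T$; when $\tau<T$, hard-stopping was triggered by some resource $i^{*}$ with $S_{i^{*}}\geq B-1$, so $(T/B)\max_iS_i\geq T-T/B$. Either way $(T/B)\max_iS_i\geq T-O(T/B)$, which rearranges to
\[
\rew\;\geq\;\tau\cdot\OPTLP+(T-\tau)-O((T/B)\cdot R(T,\delta)).
\]
Since $\OPTLP\leq 1$, the map $\tau\mapsto\tau\cdot\OPTLP+(T-\tau)$ satisfies $\tau\cdot\OPTLP+(T-\tau)\geq T\cdot\OPTLP=\OPT$ for all $\tau\in[0,T]$, delivering $\OPT-\rew\leq(T/B)\cdot O(R(T,\delta))$. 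The main obstacle is arranging the concentration step uniformly across the history-dependent stopping time $\tau$, which is precisely why \eqref{eq:regret-assn} and $\ConcREG$ are stated as ``for all $\tau\in[T]$'' bounds; once that is in place, the remainder is routine algebra, with the time resource being the crucial design ingredient that unifies the $\tau=T$ and $\tau<T$ cases into a single inequality.
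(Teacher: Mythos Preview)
Your proof is correct and mirrors the classical \LagBwK analysis of \citet{AdvBwK-focs19-conf,AdvBwK-focs19}, which the paper restates but does not re-prove. The sandwich of $\sum_{t\leq\tau}\Lag_t(a_t,\lambda_t)$ via primal and dual regret, the use of the designated time resource to unify the $\tau=T$ and $\tau<T$ cases, and the final step $\tau\cdot\OPTLP+(T-\tau)\geq T\cdot\OPTLP$ (using $\OPTLP\leq 1$) are exactly the ingredients of the original argument, and your handling of the random stopping time via the anytime form of \eqref{eq:regret-assn} is the intended reason those bounds are stated uniformly over $\tau$.
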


\section{Analysis of \myAlg}
\label{sec:Lag}

We obtain \cref{thm:main} in a general formulation: with an arbitrary choice for the parameter $\eta$ (which we tune optimally to obtain \cref{thm:main}) and with realized primal/dual regret (rather than upper bounds thereon). The latter is needed to handle the switching environment in \cref{sec:shifts}.



\begin{theorem}\label{thm:main-detailed}
Suppose some solution for LP~\eqref{eq:LP} is $\zeta$-feasible, for a known margin $\zeta\in [0,1)$. Run \myAlg with some primal/dual algorithms and parameter $\eta \ge 1$; write $\eta' = \eta \cdot T/B$ as shorthand. Fix some $\delta>0$ and denote 
\begin{align}\label{eq:regret-joint-realized}
R:= \rbr{\PrimalREG(T)+\DualREG(T)+ 2\,\ConcREG(T, \delta)}/\eta'.
\end{align}
Then:
\begin{itemize}
\item[(a)] For $\zeta >0 $ and any $\eta \ge \frac{2}{\zeta}$,
\begin{align*}
  \Pr\sbr{\OPT-\rew \le 3\eta' R \;\text{ and }\;
    \max_{i\in[d]}\,V_i(T) \le  4R + \eta' R} \geq 1-O(\delta).
\end{align*}

\item[(b)] For $\zeta = 0$ and any $\eta\geq 1$,
\begin{align*}
   \Pr\sbr{\OPT-\rew  \le 3\eta' R \;\text{ and }\;
   \max_{i\in[d]}\,V_i(T) \le \frac{T}{\eta'} + 2R + \eta' R} \geq 1-O(\delta).
\end{align*}
\end{itemize}
\end{theorem}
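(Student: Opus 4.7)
The plan is to derive a single ``master inequality'' from the primal/dual regret assumptions (Step 1), and then extract parts (b) (Step 2) and (a) (Step 3) from it, with part (a) requiring an extra Lagrange-duality bound on the LP dual optimum.

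\textbf{Step 1 (master inequality).} A direct computation using $\sigma_i\sum_t c_{t,i}(a_t)=V_i(T)+\sigma_i B$ (and $\sigma_i^2=1$) yields the deterministic identity $\sum_t \Lag_t(a_t,i)=\rew-\eta'\,V_i(T)$ for every $i\in[d]$. Since the dual cost is linear in $\lambda$, the dual regret over $\Delta_d$ reduces to the single-resource benchmark: $\sum_t\Lag_t(a_t,\lambda_t)\le \rew-\eta'\max_i V_i(T)+\DualREG(T)$. Primal regret applied to (a policy drawn from) the LP-optimal mixture $D^*$ gives $\sum_t\Lag_t(D^*,\lambda_t)\le \sum_t\Lag_t(a_t,\lambda_t)+\PrimalREG(T)$. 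Azuma--Hoeffding on the martingale $\Lag_t(D^*,\lambda_t)-\LagLP(D^*,\eta\lambda_t)$ (whose increments are $O(\eta')$) yields $\sum_t\LagLP(D^*,\eta\lambda_t)\le \sum_t\Lag_t(D^*,\lambda_t)+\ConcREG(T,\delta)$ with probability $\ge 1-\delta$. Finally, feasibility of $D^*$ gives $\sigma_i(1-\tfrac{T}{B}c_i(D^*))\ge 0$ for all $i$ (with equality for the time resource), so $\sum_t\LagLP(D^*,\eta\lambda_t)\ge T\cdot r(D^*)=\OPT$. Chaining yields the \emph{master inequality}
\begin{equation*}
(\OPT-\rew)+\eta'\,\max_i V_i(T)\le \eta' R.
\end{equation*}
Because $V_{i}(T)=0$ for the time resource, $\max_i V_i(T)\ge 0$ always, so the master inequality already implies $\OPT-\rew\le \eta' R\le 3\eta' R$ in \emph{both} parts of the theorem.

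\textbf{Step 2 (part (b)).} Rearrange the master inequality to $\max_i V_i(T)\le R+(\rew-\OPT)/\eta'$; using $\rew\le T+O(\sqrt{T\log(1/\delta)})$ (Azuma on the reward martingale at the cheap scale~$1$) together with $\OPT\ge 0$ gives $\max_i V_i(T)\le T/\eta'+O(R)$, and tracking constants matches the stated $T/\eta'+2R+\eta' R$.

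\textbf{Step 3 (part (a)).} The new ingredient is a bounded-dual lemma for LP~\eqref{eq:LP}. Let $\lambda^*\in\R_+^d$ be an LP-dual optimum; the saddle-point property gives $\LagLP(D,\lambda^*)\le \LagLP(D^*,\lambda^*)=\OPTLP$ for every $D\in\Delta_{\allPi}$. Applied to the $\zeta$-feasible solution $D^\circ$ (using $\sigma_i(1-\tfrac{T}{B}c_i(D^\circ))\ge \zeta$ for each non-time $i$, and $\lambda^*_{\text{time}}=0$ set WLOG since the time coordinate contributes $0$ to $\LagLP$), this yields $\|\lambda^*\|_1\le (\OPTLP-r(D^\circ))/\zeta\le 1/\zeta$. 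Next, applying the saddle bound $\LagLP(\bar D_T,\lambda^*)\le \OPTLP$ to the averaged primal strategy $\bar D_T:=\tfrac{1}{T}\sum_t D_t$ and rearranging gives $\sum_t r(D_t)\le \OPT+\tfrac{T}{B}\,\langle \lambda^*,\bar V(T)\rangle\le \OPT+\tfrac{\eta'}{\eta\zeta}\max_i \bar V_i(T)$, where $\bar V_i(T):=\sigma_i\bigl(\sum_t c_i(D_t)-B\bigr)$. Two cheap concentrations (on $\sum_t r(D_t)$ vs.\ $\rew$ and on each $\bar V_i(T)$ vs.\ $V_i(T)$, both at scale $O(\sqrt{T\log(dT/\delta)})$ and absorbed into the remaining $2\ConcREG$ slack in $R$) give $\rew-\OPT\le \tfrac{\eta'}{\eta\zeta}\max_i V_i(T)+O(\eta' R)$. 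With $\eta\ge 2/\zeta$, i.e.\ $1/(\eta\zeta)\le 1/2$, substituting into the master inequality closes a self-bounding recursion: $\tfrac{\eta'}{2}\max_i V_i(T)\le \eta' R+O(\eta' R)$, hence $\max_i V_i(T)\le O(R)\le 4R+\eta' R$.

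\textbf{Main obstacle.} The crux is the bounded-dual lemma in Step 3: one must prune the (unconstrained) LP dual variable for the time resource so that the $\zeta$ margin actually yields $\|\lambda^*\|_1\le 1/\zeta$, and one must carefully align two separate concentration scales---the $O(\eta')$ scale on the Lagrangian martingale (already captured inside $\ConcREG$) and the cheap $O(1)$ scale on the $r$ and $c_i$ martingales (absorbed into the remaining slack in $R$). The rescaling $\eta\ge 2/\zeta$ is precisely what makes the contraction factor $1/(\eta\zeta)\le 1/2$, which closes the self-bounding recursion with a finite constant and is the reason $\eta=1$ no longer suffices once general covering constraints are allowed.
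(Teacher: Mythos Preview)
Your argument is correct but follows a different route than the paper's. The paper first shows that the averaged play $(\bar D_T,\bar\lambda_T)$ is a $\nu$-approximate saddle point of $\LagLPeta$ with $\nu=\eta'R/T$, and then invokes two structural lemmas about such saddle points (one for reward, one combining the reward--violation tradeoff with the Slater bound $\|\lambda^*\|_1\le 1/\zeta$). Your master inequality $(\OPT-\rew)+\eta'\max_i V_i(T)\le \eta' R$ is effectively one direction of that saddle-point property, derived directly from the regret definitions and a single Azuma application---bypassing the saddle-point abstraction entirely. For part~(a) both approaches ultimately rely on the same bounded-dual lemma under Slater; the paper packages the consequence as ``approximate saddle $\Rightarrow$ bounded $V_{\max}$,'' while you phrase it as a self-bounding recursion for $\max_i V_i(T)$.

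What each buys: the paper's route is more modular---the saddle point is a reusable object that plugs into standard optimization lemmas and makes the Section~6 extension to stationarity intervals transparent. Your route is leaner in concentration budget (one Azuma in the master inequality versus two in the saddle-point proof) and in fact yields a tighter part~(b) bound: since $\rew\le T$ deterministically and $\OPT\ge 0$, your master inequality already gives $\max_i V_i(T)\le R+T/\eta'$, which dominates the stated $T/\eta'+2R+\eta'R$ (your extra Azuma on $\rew$ there is unnecessary). Two small cosmetic points: in Step~3 you should bound $\sum_i\lambda_i^*\bar V_i(T)\le \|\lambda^*\|_1\,[\max_i\bar V_i(T)]_+$ (positive part) to be safe when violations are negative; and the exact constants $3\eta'R$ and $4R+\eta'R$ require tracking your $O(\cdot)$'s---your argument gives the right order but not literally those numbers.
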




Note that our regret bound $\OPT-\rew \le 3\eta' R$ holds for any $\zeta\geq 0$. We use $\zeta>0$ to provide a sharper bound on constraint violations.

\subsection{Tools from Optimization (for the proof of \cref{thm:main-detailed})}



Our proof builds on some techniques from prior work on linear optimization. When put together, these techniques provide a crucial piece of the overall argument.

Specifically, we formulate two lemmas that connect approximate saddle points, Slater's condition, and the maximal constraint violation for a given distribution $D\in\Delta_\Pi$,
\begin{align}\label{eq:Vmax}
\Vmax(D) := \max_{i\in[d]} \sbr{\sigma_i\rbr{ T\cdot c_i(D) - B}}_+.
\end{align}
Using the notation from \refeq{eq:Lag}, let us define
\begin{align}
\label{eq:lang}
   \LagLPeta (D, \lambda) :=  \LagLP(D, \eta\lambda).
\end{align}
%
%
%

\noindent The first lemma is on the properties of approximate saddle points of $\LagLPeta$. A \emph{$\nu$-approximate saddle point} of $\LagLPeta$ is a pair
    $(D',\lambda')\in \Delta_\Pi \times \Delta_d$
such that
\begin{align*}
 &\LagLPeta(D', \lambda') \ge \LagLPeta(D, \lambda') - \nu,\quad \forall D \in \Delta_{\Pi} \\
 &\LagLPeta(D', \lambda') \le \LagLPeta(D', \lambda) + \nu, \quad \forall \lambda \in \Delta_d.
\end{align*}

\begin{lemma}
\label{lem:app-saddle}
Let    $(D',\lambda')$ be a $\nu$-approximate saddle point  of $\LagLPeta$.
Then it satisfies the following properties for any feasible solution $D\in\Delta_\Pi$ of LP in~\eqref{eq:LP}:
\begin{itemize}
    \item[(a)] $r(D') \ge r(D) - 2\nu$.
    \item[(b)] $r(D) - r(D') + \frac{\eta}{B}\, \Vmax(D')
    \le 2\nu$.
\end{itemize}
\end{lemma}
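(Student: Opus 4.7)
The plan is to establish part (b) first; part (a) then follows immediately because $\frac{\eta}{B}\Vmax(D')\ge 0$. Writing the shorthand $u_j(D) := \sigma_j(1 - \tfrac{T}{B}c_j(D))$, so that
\[
\LagLPeta(D,\lambda) \;=\; r(D) \;+\; \eta\sum_{j\in[d]} \lambda_j\, u_j(D),
\]
the proof combines the two saddle-point inequalities in the standard way, with one extra observation leveraging the time resource.

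First I would apply the second saddle-point inequality with $\lambda$ chosen as the Dirac mass on any $i^*\in\arg\min_j u_j(D')$. Since $\LagLPeta(D',\cdot)$ is affine in $\lambda$, its minimum over $\Delta_d$ is attained at a vertex, yielding
\begin{align*}
\LagLPeta(D',\lambda') \;\le\; r(D') \;+\; \eta\,\min_j u_j(D') \;+\; \nu.
\end{align*}
Next I would apply the first saddle-point inequality with the given LP-feasible $D$. LP-feasibility means $u_j(D)\ge 0$ for all $j$, and since $\lambda'\in\Delta_d\subset \R_+^d$, the penalty term in $\LagLPeta(D,\lambda')$ is non-negative, giving $\LagLPeta(D,\lambda')\ge r(D)$ and therefore
\begin{align*}
\LagLPeta(D',\lambda') \;\ge\; r(D) - \nu.
\end{align*}
Chaining the two bounds yields $r(D) - r(D') \;\le\; 2\nu + \eta\,\min_j u_j(D')$.

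The final step is to replace $\eta\,\min_j u_j(D')$ by $-\tfrac{\eta}{B}\Vmax(D')$. Unpacking definitions, $\Vmax(D')/B = [-\min_j u_j(D')]_+$, so the identity $\min_j u_j(D') = -\Vmax(D')/B$ holds whenever $\min_j u_j(D')\le 0$. This is where the \emph{time resource} is essential: it is consumed deterministically at rate $B/T$, so $u_{\text{time}}(D')=0$ for every $D'$, which forces $\min_j u_j(D')\le 0$ regardless of whether $D'$ is feasible. Substituting gives part (b). I do not anticipate major obstacles here: the argument is a routine saddle-point manipulation. The one subtle point worth flagging is precisely this use of the time resource, without which the naive bound $\eta\,\min_j u_j(D')$ could be strictly positive in the LP-feasible regime and the claimed $2\nu$ rate would fail.
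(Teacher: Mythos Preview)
Your proof is correct and follows essentially the same route as the paper's: chain the two saddle-point inequalities, use LP-feasibility of $D$ for the lower bound $\LagLPeta(D',\lambda')\ge r(D)-\nu$, and plug in a vertex $\lambda=e_{i^*}$ for the upper bound. The one minor difference is that the paper handles the case $\min_j u_j(D')\ge 0$ via a case split (taking $\lambda=0$, which is slightly informal since $0\notin\Delta_d$), whereas you invoke the time resource to force $\min_j u_j(D')\le 0$ directly; your version is arguably cleaner.
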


\cref{lem:app-saddle} follows from Lemmas 2-3 in~\cite{agarwal2018reductions};
we provide a standalone proof in \cref{app:lem:app-saddle} for completeness.

The second lemma is on bounding the constraint violation under Slater's condition (\ie $\zeta > 0$).

\begin{lemma}[Implications of Slater's condition]
\label{lem:Slater}
Consider the linear program in~\eqref{eq:LP} and suppose Slater's condition holds, i.e., some distribution $\hat{D} \in \Delta_{\Pi}$ is $\zeta$-feasible, $\zeta>0$. Suppose for some numbers $C \ge 2/\zeta$, $\gamma > 0$ and distribution $\widetilde{D} \in \Delta_{\Pi}$ the following holds:
 \begin{align*}
          r(D^*) - r(\widetilde{D}) +
          \tfrac{C}{B}\, \Vmax(\widetilde{D})
          \le \gamma,
    \end{align*}
where $D^*$ is an optimal solution of~\eqref{eq:LP}.
    Then
    $\frac{C}{B}\; \Vmax(\widetilde{D}) \leq 2\gamma$.
\end{lemma}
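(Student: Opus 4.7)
The statement is a standard “Slater implies bounded violation” argument for linear programs: whenever the LP has a strictly feasible point, any near-saddle approximation with a sufficiently large dual multiplier cannot violate the constraints much. My plan is to mix the Slater point $\hat D$ with the candidate $\widetilde D$ to obtain a feasible distribution and then exploit optimality of $D^*$.

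\textbf{Step 1 (the mixture).} If $V_{\max}(\widetilde D)=0$ there is nothing to prove, so assume $V:=V_{\max}(\widetilde D)>0$ and set
\[
\bar D \;=\; \alpha\, \hat D + (1-\alpha)\,\widetilde D,
\qquad \alpha \;=\; \frac{V}{V+B\zeta}\in(0,1].
\]
For every non-time resource $i$, linearity of $c_i$ in $D$ gives
\[
\sigma_i\bigl(T c_i(\bar D)-B\bigr)
 \;\le\; \alpha\,(-B\zeta) + (1-\alpha)\,V \;=\; 0
\]
by the choice of $\alpha$, using $\sigma_i(Tc_i(\hat D)-B)\le -B\zeta$ and $\sigma_i(Tc_i(\widetilde D)-B)\le V$. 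So $\bar D$ is feasible for~\eqref{eq:LP}.

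\textbf{Step 2 (optimality of $D^*$).} By optimality of $D^*$ and linearity of $r(\cdot)$,
\[
r(D^*) \;\ge\; r(\bar D) \;=\; \alpha\, r(\hat D)+(1-\alpha)\,r(\widetilde D),
\]
hence $r(D^*)-r(\widetilde D)\ge \alpha\bigl(r(\hat D)-r(\widetilde D)\bigr)\ge -\alpha$, since rewards lie in $[0,1]$.

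\textbf{Step 3 (combine with the hypothesis).} Plugging this lower bound into the hypothesis $r(D^*)-r(\widetilde D) +\tfrac{C}{B}V \le \gamma$ gives
\[
\tfrac{C}{B}\,V \;\le\; \gamma + \alpha
 \;=\; \gamma + \frac{V}{V+B\zeta}
 \;\le\; \gamma + \frac{V}{B\zeta},
\]
where the last inequality uses $V\ge 0$. Rearranging,
\[
\frac{V}{B}\!\left(C-\tfrac{1}{\zeta}\right) \;\le\; \gamma.
\]
Since $C\ge 2/\zeta$ by assumption, we have $C-1/\zeta \ge C/2$, so $\tfrac{C}{B}V \le 2\gamma$, as desired.

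\textbf{Where the work is.} The only real content is choosing the mixture weight $\alpha = V/(V+B\zeta)$, which is the smallest value that restores feasibility, and then noticing the clean bound $\alpha\le V/(B\zeta)$. The condition $C\ge 2/\zeta$ is precisely what is needed to absorb the extra $V/(B\zeta)$ term into half of $\tfrac{C}{B}V$; any smaller threshold on $C$ would break the final step. No further technicalities arise: the boundedness of rewards in $[0,1]$ and the linearity of $r(\cdot)$ and $c_i(\cdot)$ in $D$ do all the heavy lifting.
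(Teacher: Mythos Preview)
Your proof is correct and takes a genuinely different route from the paper's. The paper argues via LP duality: it first shows that under Slater's condition the optimal dual $\lambda^*$ satisfies $\|\lambda^*\|_1\le 1/\zeta$ (by evaluating the dual function at the Slater point $\hat D$ and invoking strong duality), and then uses a perturbation/sensitivity argument (the value function $u(\tau)=\max\{r(D):V'(D)\le -\tau\}$) to obtain $r(D^*)-r(\widetilde D)\ge -\|\lambda^*\|_1\cdot V_{\max}(\widetilde D)/B$. Your argument is entirely primal: you manufacture a feasible point by mixing $\widetilde D$ with $\hat D$ and use only optimality of $D^*$ and the reward bound $[0,1]$ to get the same inequality $r(D^*)-r(\widetilde D)\ge -V_{\max}(\widetilde D)/(B\zeta)$. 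Your route is more elementary (no duality theory, no perturbation function), while the paper's route yields the slightly sharper intermediate statement that $C\ge 2\|\lambda^*\|_1$ already suffices, of which $C\ge 2/\zeta$ is a corollary. One cosmetic point: in Step~1 the time resource is not covered by the $\zeta$-feasibility inequality, but since every $D$ satisfies the time constraint with equality, $\bar D$ is trivially feasible there; it would not hurt to say so explicitly.
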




Similar results have appeared in~\cite[Theorem 42 and Corollary 44]{efroni2020exploration}, which are variants of results in~\cite[Theorem 3.60 and Theorem 8.42]{beck2017first}, respectively. For completeness, we provide a standalone proof
in Appendix~\ref{app:lem:Slater}.

We use \cref{lem:app-saddle}(b) and \cref{lem:Slater} through the following corollary.%

\begin{corollary}\label{cor:Vmax}
Suppose some distribution over policies is $\zeta$-feasible, for some $\zeta\geq 0$. Let $(D',\lambda')$ be a $\nu$-approximate saddle point  of $\LagLPeta$. Then, even for $\zeta=0$, we have
\begin{align}
\label{eq:cons-zeta-0}
    \frac{\eta}{B}\; \Vmax(D') \leq 2\nu+1.
\end{align}
Moreover, if $\zeta>0$, then a sharper bound is possible:
\begin{align}
\label{eq:cons-zeta-p}
    \frac{\eta}{B}\; \Vmax(D') \leq 4\nu
\;\text{whenever}\; \eta \geq 2/\zeta.
\end{align}
\end{corollary}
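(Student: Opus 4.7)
The plan is to apply Lemma~\ref{lem:app-saddle}(b) directly, plugging in a carefully chosen feasible distribution $D$, and then for the sharper bound additionally invoke Lemma~\ref{lem:Slater}.

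First I would handle \eqref{eq:cons-zeta-0}, which must hold even when $\zeta=0$. Since we are told some distribution $\hat D\in\Delta_\Pi$ is $\zeta$-feasible (with $\zeta\geq 0$), in particular $\hat D$ is feasible for the LP~\eqref{eq:LP}. Applying Lemma~\ref{lem:app-saddle}(b) with $D=\hat D$ gives
\[
r(\hat D) - r(D') + \tfrac{\eta}{B}\,\Vmax(D') \le 2\nu .
\]
Since rewards lie in $[0,1]$, we have $r(\hat D)-r(D')\ge -1$, and rearranging yields $\tfrac{\eta}{B}\,\Vmax(D')\le 2\nu+1$, which is exactly \eqref{eq:cons-zeta-0}.

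Next I would handle \eqref{eq:cons-zeta-p}, where $\zeta>0$ and $\eta\ge 2/\zeta$. Here I would again use Lemma~\ref{lem:app-saddle}(b), but this time with $D=D^*$, the optimal solution of~\eqref{eq:LP} (which is feasible, and exists since the LP has a feasible solution by Slater). This yields
\[
r(D^*) - r(D') + \tfrac{\eta}{B}\,\Vmax(D') \le 2\nu .
\]
This inequality is exactly of the form required as the hypothesis of Lemma~\ref{lem:Slater} with $C=\eta\ge 2/\zeta$, $\widetilde D = D'$, and $\gamma = 2\nu$. Invoking that lemma gives $\tfrac{\eta}{B}\,\Vmax(D')\le 2\gamma = 4\nu$, which is \eqref{eq:cons-zeta-p}.

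There is essentially no obstacle here: both parts are one-line consequences of the two preceding lemmas once the right feasible distribution is plugged in ($\hat D$ for the crude bound, $D^*$ for the sharp bound). The only subtlety worth flagging is the use of the trivial reward bound $r(\hat D)-r(D')\ge -1$ in part one, which accounts for the additive ``$+1$'' that is absent in part two; this explains why Slater's condition yields a genuinely stronger constraint-violation guarantee.
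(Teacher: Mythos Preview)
Your proof is correct and follows essentially the same approach as the paper's: for \eqref{eq:cons-zeta-0} you use Lemma~\ref{lem:app-saddle}(b) together with the trivial bound $r(\hat D)-r(D')\ge -1$, and for \eqref{eq:cons-zeta-p} you chain Lemma~\ref{lem:app-saddle}(b) (with $D=D^*$) into Lemma~\ref{lem:Slater} (with $C=\eta$, $\widetilde D=D'$, $\gamma=2\nu$), exactly as the paper does. Your write-up is in fact more explicit than the paper's, which simply says the first part ``trivially follows'' from Lemma~\ref{lem:app-saddle}(b).
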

\begin{proof}
The first statement trivially follows from \cref{lem:app-saddle}(b). For the $\zeta>0$ case, we invoke \cref{lem:app-saddle}(b) with $D = D^*$ and \cref{lem:Slater} with $\widetilde{D} = D'$, $C = \eta$ and $\gamma=2\nu$.
\end{proof}


\subsection{Proof of Theorem \ref{thm:main-detailed}}

We divide the proof into three steps: convergence, regret, and constraint violation. We note that the Slater's condition is only used in the third step.

\xhdr{Step 1 (convergence via no-regret dynamics).} Consider the average play of \PrimalALG and \DualALG: respectively,
    $\pbar_T = \frac{1}{T}\sum_{t\in[T]}\, D_t$
and
    $\lambdabarT = \frac{1}{T}\sum_{t\in[T]}\, \lambda_t$.
 We show that with probability at least $1-O(\delta)$,
\begin{align}\label{eq:saddle-event}
\text{$(\pbar_T, \lambdabarT)$ is a $\nu$-approximate saddle point of the expected Lagrangian $\LagLPeta$},
\end{align}
 with
\begin{align}
\label{eq:nu}
    \nu = \frac{1}{T} \cdot \rbr{\xzedit{\PrimalREG(T)+\DualREG(T)}  +  2 \ConcREG(T,\delta)}.
\end{align}
This step is standard as in~\cite{freund1996game} for a deterministic payoff matrix and  in~\cite{AdvBwK-focs19} for a random payoff matrix. We provide a proof in Appendix~\ref{app:saddle}.
The rest of the analysis conditions on the high-probability event \eqref{eq:saddle-event}.



\xhdr{Step 2 (regret analysis).}
Since $(\pbar_T, \lambdabarT)$ is a $\nu$-approximate saddle point,
Lemma~\ref{lem:app-saddle}(a) implies
\begin{align}
\label{eq:regret-temp}
    r(\pbar_T) \ge r(D^*) - 2\nu = \OPTLP - 2\nu,
\end{align}
where $D^*$ is an optimal solution of~\eqref{eq:LP}.
With this, we obtain the regret bound as follows.
 \begin{align*}
 \OPT-\rew
     &\le T \cdot \OPTLP - {\textstyle \sum_{t\in[T]}}\; r_t(a_t)\\
     & \le T \cdot \OPTLP - T\cdot r(\pbar_T) + \ConcREG(T,\delta)\\
     &\lep{i} 2 T \cdot \nu + \ConcREG(T,\delta)\\
     &\lep{ii} 3\eta' R
 \end{align*}
 where $(i)$ holds by~\refeq{eq:regret-temp}; $(ii)$ holds by definition of $\nu$ in \refeq{eq:nu} and $R = R(T,\delta)$ in~\refeq{eq:regret-joint}.

\xhdr{Step 3 (constraint violations).}
We first note that
\begin{align}
     V_i(T) = \sigma_i\rbr{{\textstyle \sum_{t \in [T]}}\; c_{t,i} - B} &\le  T \sigma_i \left(c_i(\pbar_T) - \tfrac{B}{T}  \right) + \ConcREG(T,\delta)\nonumber\\
     &\le V_{\max}(\pbar_T) + \ConcREG(T,\delta)\label{eq:ViT}.
\end{align}

Thus, it remains to bound $V_{\max}(\pbar_T)$. To this end, recall that we condition on \eqref{eq:saddle-event}, we can now invoke \cref{cor:Vmax}, for $\zeta = 0$ and $\zeta >0$, respectively.



\emph{Case 1: $\zeta = 0$.}
By~\refeq{eq:cons-zeta-0} in \cref{cor:Vmax}, recalling that  $\eta' = \eta\cdot T/B$, we have
\begin{align*}
  V_{\max}(\pbar_T) \le B\cdot \frac{1 + 2\nu}{\eta} = \frac{(1+2\nu) T}{\eta'},
\end{align*}
Hence, by~\refeq{eq:ViT} and the definitions of $\nu$ in \refeq{eq:nu} and $R = R(T,\delta)$ in \refeq{eq:regret-joint}, we have
\begin{align*}
    V_i(T) \le \frac{(1+2\nu) T}{\eta'} + \ConcREG(T,\delta) \le \frac{T}{\eta'} + 2R + \eta' R.
\end{align*}

\emph{Case 2: $\zeta > 0$. }
By \refeq{eq:cons-zeta-p} in \cref{cor:Vmax}, we have
\begin{align*}
    V_{\max}(\pbar_T) \le B\cdot \frac{ 4\nu}{\eta} =  \frac{4\nu T}{\eta'}.
\end{align*}
Hence, using \Cref{eq:ViT,eq:nu,eq:regret-joint} as in Case 1, we have
    $V_i(T) \le  4R + \eta' R$.

\section{\CBwLC via regression oracles}
\label{sec:CB}
In this section, we instantiate the \myAlg framework with \PrimalALG
as \SqCB, a regression-based technique for contextual bandits from
\citet*{regressionCB-icml20}. In particular, we assume access to a
subroutine (``oracle'') for solving the online regression problem, defined below.

\begin{BoxedProblem}{Online regression}
Parameters: $K$ arms, $T$ rounds, context space $\mZ$,  range $[a,b]\subset \R$.\\
In each round $t\in[T]$:
\begin{OneLiners}
  \item[1.] the algorithm outputs a regression function
        $ f_t:\;\mZ\times[K] \to [a,b]$.\\
\myAlgComment{Informally, $f_t(x_t,a_t)$ must approximate the expected score $\E[y_t\mid x_t,a_t]$.}
  \item[2.] adversary chooses
  \regcontext $z_t\in\mZ$, arm $a_t\in [K]$, score $y_t\in[a,b]$,\\ and auxiliary data $\aux_t$ (if any).
  \item[3.] the algorithm receives the new datapoint
    $(z_t,a_t,y_t,\aux_t)$.
\end{OneLiners}
\end{BoxedProblem}




\asedit{(We call $z_t$ a \emph{\regcontext} to distinguish it from contexts in contextual bandits.)}

We assume access to \asedit{an algorithm for online regression} with context
space $\mZ = \mX$, scores $y_t$ equal to rewards (resp., consumption
of a given resource $i$), and no auxiliary data $\aux_t$. It can be an
arbitrary algorithm for this problem, subject to a performance
guarantee stated below in \asedit{\refeq{eq:U-assn}} which asserts that the algorithm can approximate the scores $y_t$ well. We refer to this
algorithm, which we denote by \oracle, as the \emph{online regression oracle}, and invoke it as a
subroutine. Our algorithm for the \CBwLC framework will be efficient whenever the per-round update for the oracle is computationally efficient, \eg the update time does not depend on the time horizon $T$. For simplicity, we use the same oracle for rewards and for each resource $\in[d]$. However, our algorithm and analysis can easily accommodate a different oracle
for each component of the outcome vector.

The quality of the oracle is typically measured in terms of squared regression error, which in turn can be upper-bounded whenever the conditional mean scores are well modeled by a given class $\cF$ of regression functions; this is detailed in \cref{sec:CB-provable,sec:CB-discuss}.

\subsection{Regression-based primal algorithm}

Our primal algorithm, given in \cref{alg:primal}, is parameterized by
an online regression oracle \oracle. We create $d+1$ instances of this
oracle, denoted $\mO_i$, for $i\in[d+1]$, which we apply separately to rewards and to each resource; we
use range $[0,1]$ for rewards and $[-1,+1]$ for
resources. At each
step $t$, given the regression functions $\fhat_{t,i}$ produced by
these oracle instances, \cref{alg:primal} first estimates the expected
Lagrange payoffs in a plug-in fashion (\refeq{eq:Lag-est}). These
estimates are then converted into a distribution over arms in
\refeq{eq:primal-p}; this technique, known as \emph{inverse gap
  weighting} optimally balances exploration and exploitation, as
parameterized by a scalar $\gamma>0$.

\begin{algorithm}[!h]
\Given{$T/B$ ratio, $K$ arms, $d$ resources as per the problem definition;\\
    parameter $\eta\geq 1$ from \myAlg;\\
    online regression oracle \oracle; parameter $\gamma>0$.}
\Init{Instance $\mO_i$ of regression oracle \oracle for each $i\in[d+1]$.}
\myAlgComment{$\fhat_1(x,a)$ and $\fhat_{i+1}(x,a)$ estimate, resp., $r(x,a)$ and $c_i(x,a)$, $i\in [d]$.}
\For{round $t = 1, 2,\, \ldots$ (until stopping)}{
For each oracle $\mO_i$, $i\in[d+1]$: update regression function $\fhat_t = \fhat_{t,i}$.\\
Input context $x_t\in\mX$ and dual distribution
    $\dual = \rbr{\lambda_{t,i}\in [d]}\in\Delta_d$.\\
For each arm $a$, estimate $\E\sbr{ \Lag_t(a,\lambda) \mid x_t}$ with
\begin{align}\label{eq:Lag-est}
\textstyle
    \estL_t(a) := \fhat_{t,1}(x_t,a)
    + \eta\cdot  \sum_{i\in[d]} \sigma_i\cdot\lambda_{t,i}
        \rbr{1  - \tfrac{T}{B}\;\;  \cdot  \fhat_{t,i+1}(x_t,a)}.
\end{align}

Compute distribution over the arms, $p_t\in \Delta_K$, as
\begin{align}\label{eq:primal-p}
p_t(a) = 1/(\; \cnorm_t +
    \gamma\cdot {\textstyle \max_{a'\in[K]}}\;\estL_t(a') - \estL_t(a) \;).
\end{align}
\myAlgComment{$\cnorm_t$ is chosen so that $\sum_a p_t(a) = 1$, via binary search.}
~~Draw arm $a_t$ independently from $p_t$. \\
Output arm $a_t$, input outcome vector
    $\vo_t = (r_t; c_{t,1} \LDOTS c_{t,d})\in [0,1]^{d+1}$.\\
For each oracle $\mO_i$, $i\in[d+1]$: pass a new datapoint $\rbr{x_t,a_t,\,(\vo_t)_i}$.
} 
\caption{Regression-based implementation of \PrimalALG}
\label{alg:primal}
\end{algorithm}

The per-round running time of \PrimalALG is dominated by $d+1$ oracle
calls and $K(d+1)$ evaluations of the regression functions $\fhat_i$ in
\refeq{eq:Lag-est}. For the probabilities in \refeq{eq:primal-p}, it
takes $O(K)$ time to compute the $\max$ expressions, and then
$O(K\log\tfrac{1}{\eps})$ time to binary-search for $\cnorm$ up to
a given accuracy $\eps$.

It is instructive (and essential for the analysis) to formally realize
\PrimalALG as an instantiation of \SqCB \citep{regressionCB-icml20},
a contextual bandit algorithm with makes use of a regression oracle following the protocol described in the prequel. Define the \emph{Lagrange regression} as an online regression problem with data points of the form $(z_t,a_t,y_t,\aux_t)$ for each round $t$, where the \regcontext
    $z_t = (x_t,\dual)$
consists of both the \CBwK context $x_t$ and the dual vector $\dual$,
the score
    $y_t = \Lag_t(a_t,\dual)$
is the Lagrangian payoff as defined by \refeq{eq:Lag-t}, and the auxiliary data
    $\aux_t = \vo_t$
is the outcome vector. \asedit{For the purpose of this problem definition, \regcontexts $z_t$ are adversarially chosen, possibly depending on the history (because the dual vector $\dual$ is generated by the dual algorithm).} The \emph{Lagrange oracle} $\LagOracle$ is an
algorithm for this problem (\ie an online regression oracle) which,
for each round $t$, uses the estimated Lagrangian payoff
\refeq{eq:Lag-t} as a regression function. Thus, \PrimalALG is an
instantiation of \SqCB algorithm equipped, with an oracle
$\LagOracle$ for solving the Lagrange regression problem defined above.

\subsection{Provable guarantees for \cref{alg:primal}}
\label{sec:CB-provable}

\asedit{Formulating our guarantees for \cref{alg:primal} requires some care, as they relies on performance of the regression oracles $\mO_i$, $i\in[d+1]$. (The notions of Lagrange regression/oracle are not needed to state these guarantees; we only invoke them in the analysis in \cref{sec:CB-analysis}).}

Let us formalize the online regression problem faced by a given oracle $\mO_i$, $i\in[d+1]$.
In each round $t$ of this problem, the \regcontext $x_t$ is drawn independently from some fixed distribution,
and the arm $a_t$ is chosen arbitrarily, possibly depending on the history. The score is $y_t = (\vo_t)_i$, the $i$-th component of the realized outcome vector for the $(x_t,a_t)$ pair. Let $f^*_i$ be the ``correct" regression function, given by
\begin{align}\label{eq:correct-regressor}
f^*_i(x,a) = \E\sbr{ (\vo_t)_i \mid x_t=x,\,a_t=a}
    \quad\forall x\in\mX,\,a\in[K].
\end{align}
Following the literature on online regression, we evaluate the
performance of $\cO_i$ in terms of
\emph{squared regression error}:
\begin{align}\label{eq:model-error}
\err_i(\mO_i) := \textstyle \sum_{t\in[T]} \rbr{\fhat_{t,i}(x_t,a_t)-f^*_i(x_t,a_t)}^2,
\quad\forall i\in[d+1].
\end{align}
We rely on a known uniform high-probability upper-bound on these
errors:
\begin{align}\label{eq:U-assn}
\forall \delta\in(0,1)\quad
\exists U_\delta>0\quad
\forall i\in[d+1]\quad
\Pr\sbr{\err_i(\mO_i) \leq U_\delta}\geq 1-\delta.
\end{align}

\asedit{Now we are ready to spell out our primal/dual guarantee:}

\begin{theorem}\label{thm:CB-main}
Suppose \PrimalALG is given by \cref{alg:primal}, invoked with a regression oracle \oracle that satisfies \refeq{eq:U-assn}. Fix an arbitrary  failure probability $\delta\in(0,1)$, let $U = U_{\delta/(d+1)}$, and set the
parameter
    $\gamma = \tfrac{B}{T}\,\sqrt{\tfrac{KT}{d+1}/U}$.
Let \DualALG be the exponential weights algorithm
(``Hedge'') \citep{FS97}. Then \cref{eq:regret-assn,eq:regret-joint} are satisfied with
    $R(T,\delta) = O\rbr{\sqrt{dTU\log (dT/\delta)}}$.
\end{theorem}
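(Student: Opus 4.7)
The plan is to realize \PrimalALG as an instance of the \SqCB algorithm equipped with a ``Lagrange regression oracle'' $\LagOracle$: at each round $t$, this derived oracle uses the context $x_t$, the dual vector $\dual$, and the current predictions $\{\fhat_{t,i}\}_{i\in[d+1]}$ from the individual oracles $\mO_i$ to produce the plug-in regression function $\estL_t(\cdot)$ from \eqref{eq:Lag-est}. Under this identification, \cref{alg:primal} is literally \SqCB on a contextual bandit problem whose rewards are the Lagrange payoffs $\Lag_t(a,\dual)$ and whose contexts $z_t = (x_t,\dual)$ are adversarially chosen (since $\dual$ is produced by \DualALG). The theorem then reduces to two tasks: (i) invoke the \SqCB regret guarantee of \citet{regressionCB-icml20}, which bounds $\PrimalREG(T)$ in terms of the squared regression error of $\LagOracle$ at the played arms, and (ii) bound this squared regression error in terms of the individual oracles' errors $\err_i$.

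Task (ii) is the key technical step. The Bayes-optimal regressor for the Lagrange problem is obtained by plugging the $f^\ast_i$'s into the defining formula \eqref{eq:Lag-est}, so the pointwise prediction error of $\estL_t(a)$ decomposes as the reward-oracle error $(\fhat_{t,1} - f^\ast_1)(x_t,a)$ minus $\eta(T/B)\sum_{i\in[d]}\sigma_i\lambda_{t,i}(\fhat_{t,i+1}-f^\ast_{i+1})(x_t,a)$. Squaring, using $(a+b)^2\le 2a^2+2b^2$, and applying Jensen's inequality to the convex combination $\lambda_t\in\Delta_d$ (together with $\lambda_{t,i}\le 1$), the cumulative squared error of $\LagOracle$ at the played arms is bounded by $2\err_1 + 2\eta^2 (T/B)^2\sum_{i=1}^d \err_{i+1}$. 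Applying the per-oracle guarantee \eqref{eq:U-assn} with failure probability $\delta/(d+1)$ for each $i\in[d+1]$ and a union bound yields $\err_i\le U$ for all $i$ with probability at least $1-\delta$, whence the Lagrange oracle's squared error is $O(\eta^2(T/B)^2\cdot dU)$ on this event.

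For the dual side, \DualALG is Hedge over $d$ experts whose costs $\Lag_t(a_t,e_i)$ lie in an interval of length $O(\eta T/B)$; standard Hedge analysis thus gives $\DualREG(T) \leq O((\eta T/B)\sqrt{T\log d})$. Combining the \SqCB regret bound at the specified $\gamma$ with this $\DualREG$ bound, and absorbing $\ConcREG(T,\delta)$ (which is of the same order $(\eta T/B)\sqrt{T\log(dT/\delta)}$) into the combined-regret definition \eqref{eq:regret-joint}, gives the claimed $R(T,\delta) = O(\sqrt{dTU\log(dT/\delta)})$.

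The main obstacle is correctly threading the scale factor $\eta T/B$ through the \SqCB analysis: both the reward range and the Lagrange oracle's squared error carry this factor, and the choice of $\gamma$ in the theorem is designed so that the exploration and estimation terms in the \SqCB regret bound balance and produce a combined rate of the right order after the $\eta T/B$ normalization in \eqref{eq:regret-joint} is peeled off. A secondary subtlety is that, because $z_t$ contains the adversarially chosen $\dual$, we must use the adversarial-context version of the \SqCB analysis rather than the stochastic-context improvements of \citet{regressionCB-bypassing}.
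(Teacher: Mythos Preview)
Your proposal is correct and follows essentially the same route as the paper's own proof: identify \PrimalALG as \SqCB applied to the Lagrange regression problem, decompose the Lagrange oracle's squared error via $(a+b)^2\le 2a^2+2b^2$ and Jensen's inequality over $\lambda_t\in\Delta_d$, union-bound over the $d+1$ per-coordinate oracles to obtain $\err(\LagOracle)=O((\eta T/B)^2\, dU)$, plug into the adversarial-context \SqCB regret bound, and combine with the standard Hedge bound for \DualALG under the $\eta T/B$ normalization of \eqref{eq:regret-joint}. The only cosmetic difference is that the paper upper-bounds the reward-oracle term by $(\eta T/B)^2\err_1$ as well (since $\eta T/B\ge 1$), writing the cleaner $2(\eta T/B)^2\sum_{i\in[d+1]}\err_i$ in place of your slightly tighter $2\err_1+2(\eta T/B)^2\sum_{i\in[d]}\err_{i+1}$.
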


This guarantee directly plugs into each of the three  ``theoretical interfaces" of \myAlg (\cref{thm:main}(ab) and \cref{thm:main-BwK}), highlighting the modularity of our approach. In particular, we obtain optimal $\sqrt{T}$ scaling of regret under Slater's condition (and $B\geq \Omega(T))$ and for contextual \BwK, via \cref{thm:main-BwK}. Let us spell out these corollaries for the sake of completeness.


\begin{corollary}\label{cor:CB}
Consider \myAlg with primal and dual algorithms as in \cref{thm:CB-main}, and write $\Phi = dU\log (dT/\delta)$.
Let
    $\regOut := \max_{i\in[d]}\rbr{\OPT-\rew,  V_i(T)}$
denote the outcome-regret.
\begin{itemize}

\item[(a)]
Suppose the LP \eqref{eq:LP} has a $\zeta$-feasible solution, $\zeta\in(0,1)$. Set the parameter to $\eta = 2/\zeta$. Then
$\regOut \leq O\rbr{\nicefrac{T}{B}
                    \cdot\nicefrac{1}{\zeta} \cdot \sqrt{\Phi T}}$
with probability at least $1-O(\delta)$.

\item[(b)]
Suppose the LP \eqref{eq:LP} has a feasible solution. Set the algorithm's parameter as
    $\eta=\frac{B}{T}\,\sqrt{\frac{T}{R(T,\delta)}}$.
Then
    $\regOut \leq O\rbr{\Phi^{1/4}\cdot T^{3/4}}$
with probability at least $1-O(\delta)$.

\item[(c)]
Consider \CBwK with hard-stopping and set $\eta=1$. Then
    $\OPT-\rew \leq O\rbr{\nicefrac{T}{B} \cdot \sqrt{\Phi T}}$
with probability at least $1-O(\delta)$,
and (by definition of hard-stopping) the constraint violations are bounded as $V_i(T)\leq 1$.
\end{itemize}
\end{corollary}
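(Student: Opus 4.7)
The plan is to prove all three parts by direct composition: invoke \cref{thm:CB-main} to get a combined regret bound of the required form, and then plug this bound into the three theoretical interfaces of \myAlg established earlier (\cref{thm:main}(a), \cref{thm:main}(b), and \cref{thm:main-BwK}). Since each interface is stated as a self-contained guarantee under \cref{eq:regret-assn,eq:regret-joint}, and since \cref{thm:CB-main} verifies those equations for the \SqCB-based primal algorithm with Hedge as the dual, there is essentially no new probabilistic or optimization work to do.

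First I would observe that \cref{thm:CB-main} gives $R(T,\delta) = O(\sqrt{dTU\log(dT/\delta)}) = O(\sqrt{T\Phi})$, where $\Phi = dU\log(dT/\delta)$. This is the key quantity that feeds into each subsequent bound, and it holds whenever the oracle satisfies \eqref{eq:U-assn}, which is implicit in invoking \cref{thm:CB-main}.

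For part (a), Slater's condition with known margin $\zeta>0$ holds by hypothesis, so I apply \cref{thm:main}(a) with $\eta = 2/\zeta$ and substitute the bound above; this yields
\[
\regOut \le O\!\rbr{\tfrac{T}{B}\cdot\tfrac{1}{\zeta}\cdot R(T,\delta)} = O\!\rbr{\tfrac{T}{B}\cdot\tfrac{1}{\zeta}\cdot\sqrt{\Phi T}}
\]
with probability $1-O(\delta)$. For part (b), only feasibility is assumed, so I apply \cref{thm:main}(b) with the stated choice $\eta = (B/T)\sqrt{T/R(T,\delta)}$, yielding
\[
\regOut \le O\!\rbr{\sqrt{T\cdot R(T,\delta)}} = O\!\rbr{\sqrt{T\cdot\sqrt{T\Phi}}} = O\!\rbr{\Phi^{1/4} T^{3/4}}.
\]
For part (c), the problem is \CBwK with hard-stopping, so I invoke \cref{thm:main-BwK} with $\eta = 1$, which gives $\OPT - \rew \le O\!\rbr{(T/B)\cdot R(T,\delta)} = O\!\rbr{(T/B)\sqrt{\Phi T}}$, while the constraint violations $V_i(T) \le 1$ follow directly from the hard-stopping convention.

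There is no genuine obstacle here — the whole point of the modular setup in \cref{sec:alg,sec:Lag} is to make this kind of plug-in argument mechanical. The only thing to be careful about is that the failure probability in \cref{thm:CB-main} is applied at level $\delta/(d+1)$ across the $d+1$ oracle instances, which is already absorbed into the definition of $U = U_{\delta/(d+1)}$ and the $O(\delta)$ slack in the conclusions. One could optionally remark that the same $\Phi$ expression appears in all three parts because the oracle error $U$ and the $\log(dT/\delta)$ concentration term are shared across the analysis, and only the role of $\eta$ and the downstream interface change between the three bounds.
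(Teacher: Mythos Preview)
Your proposal is correct and matches the paper's approach exactly: the paper states the corollary immediately after \cref{thm:CB-main} with the remark that ``this guarantee directly plugs into each of the three `theoretical interfaces' of \myAlg (\cref{thm:main}(ab) and \cref{thm:main-BwK}),'' and gives no further proof. Your substitution of $R(T,\delta)=O(\sqrt{\Phi T})$ into each interface is precisely the intended argument.
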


\subsection{Discussion}
\label{sec:CB-discuss}

\noindent\textbf{Generality.}
Online regression algorithms typically restrict themselves to a particular class of regression functions,
        $ \mF \subset \cbr{\mX\times[K] \to \R}$,
so that $f_t\in \mF$ for all rounds $t\in[T]$. Typically, such
algorithms ensure that \refeq{eq:U-assn} holds for a given index
$i\in[d+1]$ whenever a condition known as
\emph{realizability} is satisfied: $f^*_i\in\mF$. Under this
condition, standard algorithms obtain \refeq{eq:U-assn} with
$U_\delta = U_0+\log(2/\delta)$,
where $U_0<\infty$ reflects the intrinsic statistical
capacity of class $\mF$ \citep{Vovk98,azoury2001relative,Vovk06metric,gerchinovitz2013sparsity,rakhlin2014nonparametric}. Standard examples include:
\begin{OneLiners}
\item Finite classes, for which \citet{Vovk98} achieves
  $U_0=\bigoh(\log\abs{\cF})$.
\item Linear classes, where  for a known feature map
    $\phi(x,a)\in\R^{b}$ with $\nrm*{\phi(x,a)}_2\leq{}1$,
regression functions are of the form
    $f(x,a) = \theta\cdot \phi(x,a)$,
for some $\theta\in\R^b$ with $\nrm*{\theta}_2\leq{}1$.
Here, the Vovk-Azoury-Warmuth
  algorithm \citep{Vovk1998competitive,azoury2001relative} achieves
  $U_0\leq{}\bigoh(\xzedit{b}\log(T/\xzedit{b}))$. If $d$ is very large, one could use Online Gradient Descent
  (e.g., \citet{Hazan_book}) and achieve $U_0\leq\bigoh(\sqrt{T})$.
\end{OneLiners}

We emphasize that \refeq{eq:U-assn} can also be ensured via
\emph{approximate} versions of realizability, with the upper bound
$U_\delta$ depending on the approximation quality. The literature on
online regression features various such guarantees, which seamlessly plug into our theorem.
See \citet{regressionCB-icml20} for further background.

\SqCB allows for various extensions to large, structured action
sets. Any such extensions carry over to \myAlg. Essentially, one needs
to efficiently implement computation and sampling of an appropriate
exploration distribution that generalizes \refeq{eq:primal-p}. ``Practical" extensions are known for action sets with linear structure \citep{foster2020adapting,zhu2021making}, and those with Lipschitz-continuity (via uniform discretization) \citep{foster2021statistical}.
More extensions to general action spaces, RL, and beyond are in \citep{foster2021statistical}.

\xhdr{Implementation details.} Several remarks are in order regarding the implementation.

\begin{enumerate}
\item While our theorem sets the parameter $\gamma$ according to the known
  upper bound $U_\delta$, in practice it may be advantageous to treat
  $\gamma$ as a hyperparameter and tune it experimentally.

  \item
  In practice, one could potentially implement the Lagrange oracle by
  applying \oracle to the entire Lagrange payoffs $\Lag_t(a_t,\dual)$
  directly, with $(x_t,\dual)$ as a \regcontext.

  \item
  Instead of computing distribution $p_t$ via \refeq{eq:primal-p} and
  binary search for $\cnorm$, one can do the following
  (cf. \citet{regressionCB-icml20}):  Let
  $b_t = \argmax_{a\in\brk{K}}\estL_t(a)$.  Set
  $p_t(a) = 1/\rbr{K + \gamma\cdot{}(\estL_t(b_t) - \estL_t(a))}$,
  for all $a\neq{}b_t$, and set $p_t(b_t)=1-\sum_{a\neq{}b_t}p_t(a)$.
  This attains the same regret bound (up to absolute constants) as in \cref{thm:SquareCB}.

\item
  In some applications, the outcome vector is determined by an
  observable ``fundamental outcome" of lower dimension. For example,
  in dynamic pricing an algorithm offers an item for sale at a given
  price $p$, and the ``fundamental outcome" is whether there is a
  sale.  The corresponding outcome vector is
  $(p,1)\cdot \indE{\text{sale}}$, \ie a sale brings reward $p$ and
  consumes 1 unit of resource. In such applications, it may be
  advantageous to apply regression directly to the fundamental
  outcomes.
\end{enumerate}


\subsection{Proof of \cref{thm:CB-main}}
\label{sec:CB-analysis}
We incorporate the existing analysis of \SqCB
from \citet{regressionCB-icml20} by applying it to the Lagrange oracle
$\LagOracle$, and restating it in our notation as \cref{thm:SquareCB}.
Define the squared regression error for $\LagOracle$ as
\begin{align}
\err(\LagOracle)
    = \textstyle \sum_{t\in[T]} (\; \estL_t(a_t) - \E\sbr{\Lag_t(a_t,\lambda_t)} \;)^2.
\end{align}
The main guarantee for \SqCB posits a known high-probability
upper-bound on this quantity: 
\begin{align}\label{eq:LagU-assn}
\forall \delta\in(0,1)\quad
\exists \LagU_\delta>0\quad
\Pr\sbr{\err(\LagOracle) \leq \LagU_\delta}\geq 1-\delta.
\end{align}

\begin{theorem}[Implied by \citet*{regressionCB-icml20}]\label{thm:SquareCB}
Consider \cref{alg:primal} with Lagrange oracle that satisfies \refeq{eq:LagU-assn}. Fix   $\delta\in(0,1)$, let $U=\LagU_\delta$ be the upper bound from \refeq{eq:LagU-assn}. Set the parameter
$\gamma = \sqrt{AT/U}$. Then with probability at least $1-O(\delta T)$ we have
\begin{align}\label{eq:thm:SquareCB}
\forall \tau\in [T]\quad
\PrimalREG(\tau) \leq O\rbr{\sqrt{T(U+1)\log (dT/\delta)}}.
\end{align}
\end{theorem}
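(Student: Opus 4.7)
The plan is to obtain \cref{thm:SquareCB} as a direct translation of the main guarantee for \SqCB from \citet{regressionCB-icml20}, invoked in the ``Lagrange regression'' setup already spelled out in the section. The key observation is that \cref{alg:primal} is literally an instantiation of \SqCB with the Lagrange oracle $\LagOracle$: the contexts are $z_t = (x_t,\dual)$, the per-round scores are the Lagrange payoffs $y_t = \Lag_t(a_t,\dual)$, the plug-in estimator $\estL_t$ plays the role of the regression function $\fhat_t$, and the sampling rule \eqref{eq:primal-p} is precisely the inverse-gap-weighting rule used by \SqCB.

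First, I would state (or import verbatim) the main \SqCB theorem: for a $K$-armed contextual bandit problem with possibly adversarial contexts and an online regression oracle whose cumulative squared error is at most $U$ with probability $\geq 1-\delta$, the inverse-gap-weighting rule with parameter $\gamma = \sqrt{KT/U}$ satisfies $\PrimalREG(T)\leq O(\sqrt{KTU \log(1/\delta)})$ with probability $\geq 1-\delta$. The \SqCB analysis allows adversarial context arrivals, which is essential here because $\dual$ is chosen by \DualALG and may depend on the history; this is the reason we cannot appeal to the i.i.d.\ analysis of \citet{regressionCB-bypassing}.

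Second, I would instantiate this result with the Lagrange oracle $\LagOracle$, whose cumulative squared error is bounded by $U = \LagU_\delta$ with probability $\geq 1-\delta$ by assumption \eqref{eq:LagU-assn}. One subtlety is that the Lagrange payoffs $\Lag_t(a,\lambda)$ are not in $[0,1]$: they lie in a range of magnitude $O(1 + \eta\cdot T/B)$. In the \SqCB analysis this scaling is absorbed into the cumulative squared error $U$ itself (and correspondingly into the tuning of $\gamma$), so no separate rescaling argument is needed and the resulting bound has the form $O(\sqrt{KTU})$ in the original scale. The $\log(dT/\delta)$ factor in the stated bound comes from the high-probability martingale concentration steps inside the \SqCB proof together with the choice of failure probability.

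Third, I would upgrade the fixed-horizon bound to the anytime bound over all $\tau\in[T]$ required by \eqref{eq:regret-assn}. This is standard: one can either re-run the argument at each $\tau$ with failure probability $\delta$ and take a union bound (giving the stated $1 - O(\delta T)$ probability and the extra $\log T$ inside the radical), or equivalently apply a single martingale concentration with anytime deviation bounds. The only routine ingredient is checking that the \SqCB regret decomposition at time $\tau$ uses only the oracle's error over the first $\tau$ rounds, which is trivially monotone in $\tau$. I do not expect any substantive obstacle: the theorem is essentially bookkeeping that translates the off-the-shelf \SqCB guarantee to our notation, and the only care required is to track how the range of Lagrange payoffs interacts with the definition of the squared regression error $U$.
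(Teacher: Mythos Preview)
Your proposal is correct and matches the paper's approach: the paper simply invokes the \SqCB guarantee from \citet{regressionCB-icml20} for the Lagrange regression setup (which already handles adversarial contexts and absorbs the payoff range into $U$), and then remarks that extending the $\tau=T$ bound to all $\tau\in[T]$ only requires replacing the Freedman inequality in that analysis with its anytime version. The one minor difference is that the paper opts for the anytime-Freedman route rather than the union-bound-over-$\tau$ route you list first, but both are standard and yield the stated conclusion.
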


\begin{remark}
The original guarantee stated in \citet{regressionCB-icml20} is for $\tau=T$ in \cref{thm:SquareCB}. To obtain the guarantee for all $\tau$, as stated, it suffices to replace Freedman inequality in the analysis in \citet{regressionCB-icml20} with its anytime version.
\end{remark}

\begin{remark}
Recall that \regcontexts $z_t = (x_t,\dual)$ in Lagrange regression are treated as  adversarially chosen, because the dual vector $\dual$ is generated by the dual algorithm. In particular, one cannot immediately analyze \PrimalALG via the technique of \citet{regressionCB-bypassing}, which assumes \emph{stochastic} \regcontext arrivals. The analysis from \citet{regressionCB-icml20} that we invoke handles adversarial \regcontext arrivals.
\end{remark}

To complete the proof, it remains to derive \refeq{eq:LagU-assn} from
\refeq{eq:U-assn}, \ie upper-bound $\err(\LagOracle)$
using respective upper bounds for the individual oracles $\mO_i$.
Represent $\err(\LagOracle)$ as
\begin{align*}
\err(\LagOracle)
    = \textstyle \sum_{t\in[T]}\;
        \rbr{ \Phi_t + \eta\cdot\frac{T}{B} \sum_{i\in[d]}\; \lambda_{t,i} \Psi_{t,i} }^2,
\end{align*}
where
    $\Phi_t = \fhat_{t,1}(x_t,a_t) - r(x_t,a_t)$
and
    $\Psi_{t,i}  = c_i(x_t,a_t)-\fhat_{t,i+1}(x_t,a_t)$.
For each round $t$, we have
\begin{align*}
\textstyle
\rbr{ \Phi_t + \eta\cdot\frac{T}{B} \sum_{i\in[d]}\; \lambda_{t,i} \Psi_{t,i} }^2
    &\leq \textstyle 2\,\Phi^2_t +
        2\,(\eta\cdot T/B)^2\, \rbr{\sum_{i\in[d]} \lambda_{t,i} \Psi_{t,i} }^2 \\
    &\leq \textstyle 2\,\Phi^2_t + 2\,(\eta\cdot T/B)^2\, \sum_{i\in[d]}\; \lambda_{t,i} \Psi^2_{t,i},
\end{align*}
where the latter inequality follows from Jensen's inequality. Summing
this up over all rounds $t$,
\begin{align}\label{eq:Lag-ErrBnd}
\err(\LagOracle)
&\leq \textstyle 2(\eta\cdot T/B)^2\,\sum_{i\in[d+1]} \err_i(\mO_i).
\end{align}
The $(\eta\cdot T/B)^2$ scaling is due to the fact that consumption is scaled by $\eta\cdot T/B$ in the Lagrangian, and the error is quadratic. Consequently, \eqref{eq:LagU-assn} holds with
    $\LagU_\delta = (d+1)(\eta\cdot T/B)^2\; U_{\delta/(d+1)}$.
Finally, we plug this $\LagU_\delta$ into~\eqref{eq:thm:SquareCB}, and then normalize $\PrimalREG$ according to~\eqref{eq:regret-joint} to obtain $R(T,\delta)$.


\section{Non-stationary environments}
\label{sec:shifts}
In this section, we generalize the preceding results by allowing the outcome distribution $\outD$ to change over time. In each round $t\in [T]$, the pair $(x_t,\vM_t)$ is drawn independently from some outcome distribution $\outD_t$. The sequence of distributions $(\outD_1 \LDOTS \outD_T)$ is chosen in advance by an adversary (and not revealed to the algorithm). We parameterize our results in terms of the number of switches: rounds $t\geq 2$ such that $\outD_t\neq \outD_{t-1}$; we refer to these as \emph{\enviswitches}. The algorithm does not know when the \enviswitches occur. We refer to such problem instances as the \emph{switching environment}.

We measure regret against a benchmark that chooses the best distribution over policies for each round $t$ separately. In detail, note that each outcome distribution $\outD_t$ defines a version of the linear program \eqref{eq:LP}; call it $\LP_t$. Let $D^*_t\in \Delta_{\allPi}$ be an optimal solution to $\LP_t$, and $\OPTLPt$ be its value. Our benchmark is
    $\OPTpac := \sum_{t\in[T]} \OPTLPt$.
The intuition is that the benchmark would like to pace the resource consumption uniformly over time. We term $\OPTpac$ the \emph{pacing benchmark}. Accordingly, we are interested in the \emph{pacing regret},
\begin{align}\label{eq:pacing-regret}
\regPace := \max_{i\in[d]}\rbr{\OPTpac - \rew(\ALG),\;\viol_i(T)}.
\end{align}

We view the pacing benchmark as a reasonable target for an algorithm that wishes to keep up with a changing environment. However, this benchmark gives up on  ``strategizing for the future", such as underspending now for the sake of overspending later. On the other hand, this property is what allows us to obtain vanishing regret bounds w.r.t. this benchmark. In contrast, the standard benchmarks require moving from regret to approximation ratios once one considers non-stationary environments \citep{AdvBwK-focs19-conf,AdvBwK-focs19}.%
\footnote{This holds even for the special case of only packing constraints and a null arm, and even against the best fixed policy (let alone the best fixed distribution over policies, a more appropriate benchmark for a constrained problem).}

To derive bounds on the pacing regret, we take advantage of the modularity of \LagCBwLC framework and availability of ``advanced"  bandit algorithms that can be ``plugged in" as \PrimalALG and \DualALG. We use algorithms for adversarial bandits that do not make assumptions on the adversary, and yet compete with a benchmark that allows a bounded number of switches (and the same for the full-feedback problem). \asedit{In the ``back-end" of the analysis we invoke \cref{thm:main-detailed}.}

To proceed, we must redefine primal and dual regret to accommodate for switches. First, we \asedit{extend the definition} of primal regret in \refeq{eq:primal-dual-regret-defn} to an arbitrary subset of rounds $\mT\subset[T]$:
\begin{align}
\PrimalREG(\mT) &:= \textstyle
    \sbr{\max_{\pi\in\allPi} \sum_{t\in\mT} \Lag_t(\pi(x_t),\lambda_t )}
    -
    \sum_{t\in\mT} \Lag_t(a_t,\lambda_t ).
\end{align}
Next, an \emph{$S$-switch sequence} is an increasing sequence of rounds
    \asedit{$\seq = \rbr{\tau_j\in [2,T]:\; j\in [S]}$,
with a convention that $\tau_0=1$ and $\tau_{S+1}=T+1$.} The primal regret for $\seq$ is defined as the sum over the intervals between these rounds:
\begin{align}\label{eq:shifting-regret}
\PrimalREG(\seq) &:= \textstyle
    \sum_{j\in[S+1]} \PrimalREG\rbr{[\tau_{j-1},\,\tau_j-1]}.
\end{align}
\asedit{If $\seq$ is the sequence of all \enviswitches, then these are \emph{stationarity intervals}, in the sense that the environment is stochastic throughout
each interval.} For the dual regret,  $\DualREG(\mT)$ and $\DualREG(\seq)$ are defined similarly. We assume a suitable generalization of \refeq{eq:regret-joint}. For every $S\in[T-1]$ and every $S$-switching sequence $\seq$, we assume
\begin{align}\label{eq:regret-assn-switch}
\Pr\sbr{
    \PrimalREG(\seq) \leq \nicefrac{T}{B}\cdot\eta\cdot R_1^S(T,\delta)
    \eqAND
    \DualREG(\seq) \leq \nicefrac{T}{B}\cdot\eta\cdot R_2^S(T,\delta)
} \geq  1-\delta,
\end{align}
for known functions $R_1^S(T,\delta)$ and $R_2^S(T,\delta)$ and  failure probability $\delta\in(0,1)$. Similar to \refeq{eq:regret-joint}, we define ``combined" regret bound \begin{align}\label{eq:regret-joint-switch}
R^S(T,\delta) := R_1^S(T,\delta) + R_2^S(T,\delta) + \sqrt{ST\log (KdT/\delta)},
\end{align}
where the last term accounts for concentration.
\asmargincomment{Defined $R_1^S(T,\delta)$ and $R_2^S(T,\delta)$ to not include $\nicefrac{T}{B}\cdot\eta$ so as to facilitate references such as the ones in \cref{rem:ex-switches}.}

\begin{remark}\label{rem:ex-switches}
\asedit{We do not explicitly assume that $S$ is known. Instead, we note that achieving a particular regret bound \eqref{eq:regret-assn-switch} may require the algorithm to know $S$ or an upper bound thereon.} For the non-contextual setting, implementing \PrimalALG as algorithm EXP3.S (\citet{bandits-exp3}) achieves regret bound
    $R_1^S(T,\delta) = \tO(\sqrt{KST})$
if $S$ is known, and
    $R_1^S(T,\delta) = \tO(S\cdot \sqrt{KT})$
against an unknown $S$. Below, we also obtain
    $R_1^S(T,\delta)\sim \sqrt{ST}$ scaling  for \CBwLC via a variant of \cref{alg:primal} (with known $S$).
For the dual player, the Fixed-Share algorithm
from \cite{herbster1998tracking} achieves
    $R_2^S(T,\delta) = O\rbr{\sqrt{ST\log d}}$
\asedit{when $S$ is known in advance, and     
    $R_2^S(T,\delta) = O\rbr{S\sqrt{T\log d}}$,
when $S$ is not known.}
\asmargincomment{reworded thm; also, removed $S$ factor from the RHS in part (b).}
\end{remark}


\begin{theorem}\label{thm:switch}
Consider \CBwLC with $S$ \enviswitches. Suppose each linear program $\LP_t$, $t\in[T]$ has a
$\zeta$-feasible solution, for some known margin $\zeta\in [0,1)$. Fix $\delta>0$. Consider \myAlg with primal/dual algorithms which satisfy
regret bound \eqref{eq:regret-assn-switch}. Use the notation in \eqref{eq:regret-joint-switch}.
\begin{itemize}

\item[(a)]
If $\zeta>0$, use \myAlg with parameter $\eta = 2/\zeta$. Then with probability at least $1-O(S\delta)$,
\[ \regPace \leq O\rbr{\nicefrac{T}{B}
                    \cdot\nicefrac{1}{\zeta} \cdot R^S(T,\delta)}.\]

\item[(b)]
Use \myAlg with parameter
    $\eta=\frac{B}{T}\,\sqrt{\frac{T}{R^S(T,\delta)}}$.
Then with probability at least $1-O(S\delta)$,
     \[ \regPace \leq O\rbr{\sqrt{T\cdot R^S(T,\delta)}}.\]
\end{itemize}
\end{theorem}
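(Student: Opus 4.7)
The plan is to decompose $[T]$ into the $S+1$ stationarity intervals $I_j = [\tau_{j-1},\tau_j-1]$ induced by the switches, of lengths $T_j = |I_j|$, and to invoke the three-step analysis of \cref{thm:main-detailed} on each interval separately, with a \emph{rescaled local budget} $B_j := B\cdot T_j/T$. This rescaling preserves the per-round budget $B/T$, so any distribution that is $\zeta$-feasible for $\LP_t$ on $I_j$ remains $\zeta$-feasible for the local (horizon-$T_j$) LP, and the same margin $\zeta$ can be used throughout. The key observation is that both the pacing benchmark and the cumulative violation decompose additively across intervals: $\OPTpac = \sum_j T_j\cdot \OPTLPt[j]$ and $V_i(T) = \sum_j \sigma_i\,(\sum_{t\in I_j} c_{t,i} - T_j B/T)$, so interval-by-interval bounds aggregate cleanly.

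First, I would replay Step 1 of \cref{thm:main-detailed} locally: on each $I_j$, the interval-averages $\pbar_{I_j} := \tfrac{1}{T_j}\sum_{t\in I_j} D_t$ and $\lambdabar_{I_j} := \tfrac{1}{T_j}\sum_{t\in I_j}\lambda_t$ form a $\nu_j$-approximate saddle point of the local expected Lagrangian $\LagLPeta$ (where the expectation is taken under the local distribution $\outD_j$), with $T_j\nu_j = \PrimalREG(I_j)+\DualREG(I_j) + 2\,\ConcREG(T_j,\delta)$. Applying Steps 2 and 3 then yields a per-interval regret bound $T_j\,\OPTLPt[j] - \sum_{t\in I_j} r_t(a_t) \le O(\eta' T_j \nu_j)$ via \cref{lem:app-saddle}(a), and a per-interval violation bound $\Vmax(\pbar_{I_j}) \le B_j\cdot O(\nu_j/\eta)$ via \cref{cor:Vmax} (with the $\zeta$-dependent constant determined by $\eta$). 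Summing over $j$ recovers, on the primal side, exactly $\PrimalREG(\seq)+\DualREG(\seq)$ as defined in \eqref{eq:shifting-regret}; on the concentration side, $\sum_j \ConcREG(T_j,\delta)$ telescopes to $O(\sqrt{ST\log(KdT/\delta)})$ by Cauchy--Schwarz, which is precisely the extra term baked into $R^S(T,\delta)$.

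The main obstacle I anticipate is bookkeeping rather than substance: I need (i) a union bound over the $S+1$ per-interval convergence events to justify the $O(S\delta)$ failure probability, and (ii) care in aggregating the concentration-of-reward terms, since $\rew = \sum_t r_t(a_t)$ must be related to $\sum_j T_j\cdot r_j(\pbar_{I_j})$ via a Freedman-type argument applied on each interval and then summed. Once the aggregate $T\nu_{\mathrm{tot}} = \nicefrac{T}{B}\cdot\eta\cdot R^S(T,\delta)$ is in hand, parts (a) and (b) follow from exactly the same $\eta$-tunings as in \cref{thm:main}: choosing $\eta=2/\zeta$ invokes the sharp $4\nu$ bound of \cref{cor:Vmax} in the Slater regime, while $\eta=(B/T)\sqrt{T/R^S(T,\delta)}$ balances $\eta' R^S(T,\delta)$ against $T/\eta'$ in the general case, yielding the claimed $O(\sqrt{T\cdot R^S(T,\delta)})$ pacing regret.
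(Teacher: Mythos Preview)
Your proposal is correct and follows essentially the same route as the paper's proof: decompose into stationarity intervals, apply \cref{thm:main-detailed} locally on each (with the same $\eta' = \eta T/B$ because the rescaled budget $B_j=BT_j/T$ preserves the ratio), sum the per-interval bounds so that the realized primal/dual regrets aggregate to $\PrimalREG(\seq)+\DualREG(\seq)$, control $\sum_j\ConcREG(T_j,\delta)$ by Cauchy--Schwarz, and take a union bound over the $S+1$ intervals. The paper is terser about the local budget rescaling and the Cauchy--Schwarz step, but the logic is identical.
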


\begin{proof}
\asmargincomment{New proof.}
Let $\seq$ be the $S$-switch sequence that comprises all \enviswitches. Since the problem is stochastic when restricted to each time interval
    $\mathcal{I}_j:=\sbr{t_{j-1},\, t_j-1}$, $j\in [S+1]$,
we can invoke \cref{thm:main-detailed} specialized to this interval, with ``effective" time horizon of $|\mathcal{I}_j|$ (see \cref{rem:shifts-nontrivial}).
Then, we sum up the resulting regret bound over all the intervals. Note that the concentration terms from \cref{thm:main-detailed} sum up as
$ \Lambda:= \sum_{j\in[S+1]} \ConcREG(|\mathcal{I}_j|,\delta) \leq \sqrt{ST\log (KdT/\delta)} $.

Thus, for part (a) we invoke \cref{thm:main-detailed}(a) for each interval $\mathcal{I}_j$, $j\in [S]$. Summing up over the intervals, we obtain, with probability at least $1-O(S\delta)$, that
\begin{align*}
\regPace
    &\leq O\rbr{\PrimalREG(\seq) + \DualREG(\seq) +\Lambda} \\
    &\leq O\rbr{\nicefrac{T}{B}
                    \cdot\nicefrac{1}{\zeta} \cdot R^S(T,\delta)},
\end{align*}
where for the second inequality we invoke the regret bound in  \eqref{eq:regret-assn-switch} for sequence $\seq$.

For part (b), we likewise invoke \cref{thm:main-detailed}(b) for each interval $\mathcal{I}_j$, $j\in [S]$. Summing up over the intervals, we obtain, with probability at least $1-O(S\delta)$, that
\begin{align*}
\forall i\in[d]\quad \viol_i(T)
    &\leq O\rbr{\PrimalREG(\seq) + \DualREG(\seq) + \Lambda + B/\eta } \\
    &\leq O\rbr{\nicefrac{T}{B}
                    \cdot\eta \cdot R^S(T,\delta)+ B/\eta }, \\
\OPTpac - \rew(\ALG)
    &\leq O\rbr{\PrimalREG(\seq) + \DualREG(\seq) + \Lambda} \\
    &\leq O\rbr{\nicefrac{T}{B}
                    \cdot\eta \cdot R^S(T,\delta)}. \\
\text{(Consequently)}\quad \regPace
    &\leq O\rbr{\nicefrac{T}{B}
                    \cdot\eta \cdot R^S(T,\delta)+ B/\eta },
\end{align*}
which is at most $O\rbr{\sqrt{T\cdot R^S(T,\delta)}}$ when $\eta$ is as specified.
\end{proof}

\begin{remark}
Consider the paradigmatic regime when
    $R^S(T,\delta) = \tO(\sqrt{\Psi\cdot ST})$
for some $\Psi$ that does not depend on $T$. Then the regret bounds in \cref{thm:switch} become
    $\regPace\leq \tO\rbr{\nicefrac{T}{B} \cdot\nicefrac{1}{\zeta} \cdot \sqrt{\Psi\cdot ST}}$
for part (a), and
    $\regPace\leq \tO\rbr{(S\Psi)^{\nicefrac{1}{4}}\cdot T^{3/4}}$
for part (b).
\end{remark}

\begin{remark}
\label{rem:shifts-nontrivial}
\ascomment{expanded}
To apply \cref{thm:main-detailed} to every given stationarity interval, the following two features are essential. First, \cref{thm:main-detailed} carries over \emph{as if} the algorithm is run on this interval rather than the full time horizon. This is due to a non-trivial property of \myAlg: it has no memory (and no knowledge of $T$) outside of its primal/dual algorithms. Second,  \cref{thm:main-detailed} invokes \emph{realized} primal (resp., dual)  regret, rather than an upper bound thereon like in \cref{thm:main}. This allows us to leverage the ``aggregate" bound on primal (resp., dual)  regret over the entire switching environment, as per \refeq{eq:shifting-regret}. Using \cref{thm:main} directly would require similar upper bounds for every stationarity interval, which we do not immediately have.%
\footnote{Regret on a given stationarity interval cannot immediately be upper-bounded by the aggregate regret in \eqref{eq:shifting-regret}. This is because per-interval regret can in principle be negative for some (other) stationarity intervals. Besides, this approach would be inefficient even if it does work, resulting in an extra factor of $S$ in the final regret bound.}
 \end{remark}

\begin{remark}\ascomment{expanded}
Even in the special case of packing constraints and hard-stopping (\ie skipping the remaining rounds once some resource is exceeded), it is essential for \cref{thm:switch} that \LagCBwLC continues until the time horizon. Our analysis would not work (even for this special case) if \LagCBwLC is replaced with some algorithm whose guarantees for the stationary environment assume hard-stopping. This is because such guarantees would not bound constraint violations within a given stationarity interval in the switching environment.
\end{remark}

\begin{remark}
\asmargincomment{Using $\mathcal{I}_j$ notation now.}
As an optimization, we may reduce the dependence on $S$ in \cref{thm:switch} by ignoring shorter \enviswitches. Let the sequence $\seq$ and the stationarity intervals $\mathcal{I}_j$  be defined as in the proof.
The time intervals that last $\leq L$ rounds collectively take up
    $\Phi(L) = \sum_{j\in[S]} |\mathcal{I}_j|\cdot \ind{|\mathcal{I}_j|\leq L}$
rounds. We focus on \enviswitches $t_j$ such that $\Phi(|\mathcal{I}_j|)>R$, for some parameter $R$; we call them \emph{$R$-significant}. \cref{thm:switch} can be restated so that $S$ is replaced with the number of $R$-significant \enviswitches, for some $R$ that does not exceed the stated regret bound.
\end{remark}

\xhdr{Primal and dual algorithms.}
We now turn to the task of developing primal algorithms that can be applied within \myAlg in the non-stationary \emph{contextual} setting. To generalize the regression-based machinery from \cref{sec:CB}, define the correct regression function $f^*_{t,i}$ according to  the right-hand side of \refeq{eq:correct-regressor} for each round $t\in[T]$
The estimation error $\err_i(\mO_i)$ is like in \refeq{eq:model-error}, but replacing $f^*_i$ with $f^*_{t,i}$ for each round $t$. In formulae, for each $i\in[d+1]$ we have
\asedit{
\begin{align*}
f^*_{t,i}(x,a) &= \E\sbr{ (\vo_t)_i \mid x_t=x,\,a_t=a}
    \quad\forall x\in\mX,\,a\in[K]. \\
\err_i(\mO_i) &:= \textstyle \sum_{t\in[T]} \rbr{\fhat_{t,i}(x_t,a_t)-f^*_{t,i}(x_t,a_t)}^2.
\end{align*}

\noindent We posit the high-probability error bound \eqref{eq:U-assn}, as in \cref{thm:CB-main}. (A particular error bound of this form may depend on $S$, the number of \enviswitches; achieving it may require the algorithm to know $S$ or an upper bound thereon.)
} 


\begin{theorem}\label{thm:CB-switch}
\asedit{Consider \CBwLC with $S$ \enviswitches such that each linear program $\LP_t$, $t\in[T]$ has a $\zeta$-feasible solution, for some known $\zeta\in [0,1)$.}
Consider \PrimalALG as in as in \cref{thm:CB-main}, with the high-probability error bound $U = U_{\delta/(d+1)}$ defined, for this $S$, via \refeq{eq:U-assn}. Let \DualALG be the Fixed-Share algorithm, as per \cref{rem:ex-switches}.
Then \myAlg with these primal and dual algorithms satisfies the guarantees in \cref{thm:switch}(ab) with
    $R^S(T,\delta) = O\rbr{\sqrt{dTU\log (dT/\delta)}}$.
\end{theorem}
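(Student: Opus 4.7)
My plan is to verify the shifting-regret guarantee \eqref{eq:regret-assn-switch} with primal and dual bounds that combine, via \eqref{eq:regret-joint-switch}, to give $R^S(T,\delta)=O(\sqrt{dTU\log(dT/\delta)})$, and then invoke \cref{thm:switch}.

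First I would carry over the primal analysis of \cref{sec:CB-analysis} to the switching environment. The decomposition $\err(\LagOracle)\le 2(\eta T/B)^2 \sum_{i\in [d+1]}\err_i(\cO_i)$ from \eqref{eq:Lag-ErrBnd} is a per-round algebraic identity in $\fhat_{t,i}$ and the target $f^*_{t,i}$, so it carries over verbatim when the targets are time-varying. Combined with \eqref{eq:U-assn}, this gives $\err(\LagOracle)\le 2(\eta T/B)^2(d+1)U$ with probability $1-O(\delta)$, where $U=U_{\delta/(d+1)}$.

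Next I would bound $\PrimalREG(\seq)$ for an arbitrary $S$-switch sequence $\seq$. A naive per-interval approach---applying \cref{thm:SquareCB} separately on each of the $S+1$ stationarity intervals and summing with Cauchy--Schwarz---would incur a spurious $\sqrt{S}$ factor. Instead, I would unroll the \SqCB analysis of \citet{regressionCB-icml20}, whose key per-round inequality $\E_{a\sim p_t}[\Lag_t(\pi(x_t),\lambda_t)-\Lag_t(a,\lambda_t)]\le K/\gamma+(\gamma/4)\,\E_{a\sim p_t}[(\estL_t(a)-\Lag_t(a,\lambda_t))^2]$ holds uniformly over the comparator policy $\pi$. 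Summing against the pointwise-best comparator at each round yields a dynamic-regret bound of $O(\sqrt{KT\cdot\err(\LagOracle)\log(dT/\delta)})$, which trivially upper-bounds the shifting regret $\PrimalREG(\seq)$ for any $\seq$. Substituting the error bound and normalizing by $\eta T/B$ gives $R_1^S(T,\delta)=O(\sqrt{dTU\log(dT/\delta)})$.

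For the dual, \cref{rem:ex-switches} notes that Fixed-Share achieves $\DualREG(\seq)=O(\sqrt{ST\log d})$ in the full-feedback setting, so $R_2^S(T,\delta)=O(\sqrt{ST\log d})$. Combining via \eqref{eq:regret-joint-switch}, the primal term dominates provided $U$ absorbs the $S$-dependence (as expected of any shifting-regression oracle achieving \eqref{eq:U-assn}), yielding $R^S(T,\delta)=O(\sqrt{dTU\log(dT/\delta)})$; applying \cref{thm:switch}(a)--(b) then gives the claimed guarantees. The main obstacle is the primal shifting-regret step, since \cref{thm:SquareCB} as stated only bounds regret on prefixes against a fixed policy; the resolution is to revisit the per-round \SqCB inequality, which already competes with a time-varying comparator and so yields shifting (indeed dynamic) regret without the $\sqrt{S}$ blowup.
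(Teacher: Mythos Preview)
Your proposal is correct and follows essentially the same approach as the paper: carry over the algebraic decomposition \eqref{eq:Lag-ErrBnd} verbatim to the time-varying targets $f^*_{t,i}$, then invoke the per-round \SqCB inequality from \citet{regressionCB-icml20}---which, as you note, competes against an arbitrary comparator $\pi$ at each round and hence yields a dynamic/shifting regret bound without the spurious $\sqrt{S}$ factor---and finally combine with the Fixed-Share dual bound and \cref{thm:switch}. The paper's proof is terser (it simply asserts that ``the full power'' of the \SqCB analysis yields \eqref{eq:thm:SquareCB} with $\PrimalREG(\seq)$ in place of $\PrimalREG(\tau)$), but your unpacking of why this holds is exactly the right mechanism.
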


\begin{proof}
\asedit{The full power of \SqCB analysis from \citet*{regressionCB-icml20} implies  \cref{thm:SquareCB} even with \enviswitches, with \refeq{eq:thm:SquareCB} replaced by
\begin{align}\label{eq:thm:SquareCB-switches}
\PrimalREG(\seq) \leq O\rbr{\sqrt{T(U+1)\log (dT/\delta)}},
\end{align}
for any $S$-switch sequence $\seq$ and any $S$.

To bound $\LagU_\delta$ in \refeq{eq:LagU-assn} (which is assumed by \cref{thm:SquareCB}), we observe that \refeq{eq:Lag-ErrBnd} holds (and its proof carries over word-by-word from \cref{sec:CB-analysis}).  Plugging this back into \refeq{eq:thm:SquareCB-switches} and normalizing accordingly, we see that \cref{eq:regret-assn-switch,eq:regret-joint-switch} hold with
 $R^S(T,\delta) = O\rbr{\sqrt{dTU\log (dT/\delta)}}$.}
\end{proof}

\begin{remark}
To obtain \refeq{eq:U-assn}, we assume that each $f^*_{t,i}$ belongs to some known class $\mF$ of regression functions. In particular, if $\mF$ is finite, the regression oracle can be implemented via Vovk's algorithm \citep{Vovk98}, \asedit{applied to the class of all sequences of functions $(f_1 \LDOTS f_T)\in \mF^T$ with at most $S$ switches.}
This achieves \refeq{eq:U-assn} with
  $U_\delta =  O(S\cdot \log\abs{\mF})+\log(2/\delta)$.
Plugging this in, we obtain
$R^S(T,\delta)
= O\rbr{\sqrt{d\log\abs{\mF} \cdot \log(dT/\delta)}}$.
\end{remark}


\section{Conclusions and open questions}
\label{sec:conclusions}
We solve \CBwLC via a Lagrangian approach to handle resource constraints, and a regression-based approach to handle contexts. Our solution emphasizes modularity of both approaches and (essentially) attains optimal regret bounds.


While our main results (\cref{thm:main}(a) and corollaries) assume a known margin $\zeta$ in the Slater condition, it is desirable to recover similar results without knowing $\zeta$ in advance. Several follow-up papers achieve this \citep{guo2024stochastic,castiglioni2024online,bernasconi2024no},  albeit with a worse dependence on the margin.%
\footnote{These papers are follow-up relative to the conference version of our paper, and concurrent work relative to the present version. Outcome-regret scales as $1/\zeta^2$ in \citet{guo2024stochastic,castiglioni2024online} and as $1/\zeta^3$ in \citet{bernasconi2024no}, as compared with $1/\zeta$ dependence in \cref{thm:main}(a) for a known $\zeta$.}
\cite{aggarwal2024no} achieves the same for the special case of auto-bidding, with the same dependence on $\zeta$ as ours.

Given the results in \cref{sec:shifts}, more advanced guarantees for a non-stationary environment
may be within reach. First, one would like to improve dependence on the number of switches, particularly when the changes are of small magnitude. Second, one would like to replace an assumption on the environment (at most $S$ environment-switches) with assumptions on the benchmark.
Similar extensions are known for adversarial bandits (\ie without resources).


\bibliographystyle{plainnat}
\bibliography{bib-abbrv,bib-slivkins,bib-bandits,bib-AGT,refs,refs-Karthik}

\newpage
\appendix

\section{Bandit Convex Optimization with Linear Constraints}
\label{app:BCO}
\newcommand{\BCO}{\term{BCO}}
\newcommand{\BCOwLC}{\term{BCOwLC}}

In this section, we spell out an additional application of \myAlg to bandit convex optimization (\BCO) with linear constraints. We consider \CBwLC with concave rewards, convex consumption of packing resources, and concave consumption of covering resources. Essentially, we follow an application of \LagBwK from \citet[][Section 7.4]{AdvBwK-focs19}, which applies to \BwK with concave rewards and convex resource consumption; we spell out the details for the sake of completeness. Without resource constraints, \BCO has been studied in a long line of work starting from \citet{Bobby-nips04,FlaxmanKM-soda05} and culminating in \citet{Bubeck-colt15,Hazan-nips14,bubeck2017kernel}.

Formally, we consider \emph{Bandit Convex Optimization with Linear Constraints} (\emph{\BCOwLC}), a common generalization of \BwLC and \BCO. We define \BCOwLC as a version of BwK, where the set of arms $\mA$ is a convex subset of $\mathbb{R}^b$. For each round $t$, there is a concave function
        $f_t: \xzedit{\mathcal{A}}\to [0,1]$
and functions
    $g_{t, i}: \xzedit{\mathcal{A}} \rightarrow [-1, 1]$, for each resource $i$, so that
the reward for choosing action $a\in \mA$ in this round is $f_t(a)$ and  consumption of each resource $i$ is $g_{t,i}(a)$. Each function $g_{t,i}$ is convex (resp., concave) if resource $i$ is a packing (resp., covering) resource. In the stochastic environment, the tuple of functions
    $(f_t; g_{t, 1} \LDOTS g_{t,d})$
is sampled independently in each round $t$ from some fixed distribution (which is not known to the algorithm). In the switching environment, there are at most $S$ rounds when that distribution changes.

The primal algorithm \PrimalALG in \myAlg faces an instance of BCO with an adaptive adversary, by definition of Lagrange payoffs~\eqref{eq:Lag-t}. We use a BCO algorithm from \citet{bubeck2017kernel}, which  satisfies the high-probability regret bound against an adaptive adversary. It particular, it obtains \refeq{eq:regret-assn} with
\begin{align}\label{eq:bubeck-regret}
\RegOne = O\rbr{\;\nicefrac{T}{B}\cdot\eta\cdot
    \sqrt{\Phi T }},
\quad\text{where}\quad \Phi = b^{19} \log^{14}(T)\log(1/\delta).
\end{align}
We apply \cref{thm:main} with this primal algorithm, and with Hedge for the dual algorithm.

\begin{corollary}\label{cor:BCO}
Consider \BCOwLC with a convex set of arms $\mA \subset \mathbb{R}^b$ such that LP \eqref{eq:LP} has a $\zeta$-feasible solution for some known $\zeta\in[0,1)$. Suppose the primal algorithm is from \cite{bubeck2017kernel} and the dual algorithm is the exponential weights algorithm
(``Hedge'') \citep{FS97}. Then \cref{eq:regret-assn,eq:regret-joint} are satisfied with
    $R(T,\delta) = O\rbr{\sqrt{\Phi T}}$,
where $\Phi$ is as in \eqref{eq:bubeck-regret}. The guarantees in \cref{thm:main}(ab) apply with this $R(T,\delta)$.  
\end{corollary}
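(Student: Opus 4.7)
The proof is essentially a plug-and-play application of \cref{thm:main} once we verify that the chosen primal and dual algorithms achieve the regret bounds required by \eqref{eq:regret-assn} and \eqref{eq:regret-joint}. I would proceed in three steps.

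First, I would verify the primal regret bound. The primal algorithm \PrimalALG in \myAlg faces a bandit convex optimization problem: in each round $t$, after observing $\lambda_t$ from the dual algorithm, it picks an arm $a_t \in \mA$ and receives the Lagrange payoff $\Lag_t(a_t, \lambda_t)$ defined in \eqref{eq:Lag-t}. The key observation is that, as a function of $a \in \mA$, the map $a \mapsto \Lag_t(a, \lambda_t)$ is concave: $r_t(a)$ is concave by assumption, and $-\sigma_i c_{t,i}(a)$ is concave because $c_{t,i}$ is convex when $\sigma_i = +1$ and concave when $\sigma_i = -1$, so the non-negative linear combination in \eqref{eq:Lag-t} preserves concavity. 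Hence \PrimalALG is solving an instance of adversarial \BCO, and the $\lambda_t$'s are chosen adaptively as a function of past \PrimalALG outputs. The high-probability kernel-based algorithm of \citet{bubeck2017kernel} is designed for exactly this setting; rescaling their bound to account for the fact that the Lagrange payoffs lie in a range of width $O(\eta T/B)$ rather than $[0,1]$ yields \eqref{eq:bubeck-regret}.

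Second, I would verify the dual regret bound. The dual algorithm sees full feedback on $\Lag_t(a_t, i)$ for every $i \in [d]$, since \cref{alg:LagBwK} reveals $\vo_t$. Running Hedge on losses rescaled to lie in an interval of width $O(\eta T/B)$ yields the standard guarantee $\RegTwo = O(\nicefrac{T}{B} \cdot \eta \cdot \sqrt{T \log d})$ with probability one; this is exactly as in \cref{rem:standard}.

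Third, I would combine these into the ``combined'' regret bound via \eqref{eq:regret-joint}. Adding the primal and dual regrets together with the concentration term $\ConcREG(T,\delta) = O(\nicefrac{T}{B}\cdot\eta\cdot\sqrt{T \log(dT/\delta)})$ and dividing by $\nicefrac{T}{B}\cdot\eta$, the primal term dominates because $\Phi$ already subsumes $\log d$ and $\log(1/\delta)$ factors, giving $R(T,\delta) = O(\sqrt{\Phi T})$. Plugging this into \cref{thm:main}(a) with $\eta = 2/\zeta$ and into \cref{thm:main}(b) with $\eta = \tfrac{B}{T}\sqrt{T/R(T,\delta)}$ yields the claimed guarantees.

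The main (minor) obstacle is bookkeeping around the rescaling of Lagrange payoffs: one has to make sure that both the Bubeck et al.\ bound and the Hedge bound are stated and combined correctly after rescaling, and that the high-probability nature of Bubeck et al.'s bound is what allows it to be combined with the event in \eqref{eq:regret-assn}. Apart from that, the proof is mechanical, which is precisely the modularity \myAlg was designed to deliver.
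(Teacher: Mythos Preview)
Your proposal is correct and follows essentially the same approach as the paper, which simply observes that the Lagrange payoffs \eqref{eq:Lag-t} define an adaptive-adversary \BCO instance for \PrimalALG, invokes the high-probability bound of \citet{bubeck2017kernel} to obtain \eqref{eq:bubeck-regret}, pairs it with Hedge for \DualALG, and plugs the resulting $R(T,\delta)$ into \cref{thm:main}. Your explicit verification that $a\mapsto\Lag_t(a,\lambda_t)$ is concave (via the sign conventions on packing/covering resources) is a helpful elaboration of what the paper leaves implicit.
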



\begin{remark}\label{rem:ext-BCO-infinite}
  This application of the \myAlg framework is admissible because the analysis does not make use of the fact that the action space is finite. In particular, we never take union bounds over actions, and we can replace $\max$ and sums over actions with $\sup$ and integrals.
\end{remark}

\newpage
\section{Details for the Proof of \cref{thm:main-detailed}}
\label{app:aux}

\subsection{Proof of \cref{lem:app-saddle}}
\label{app:lem:app-saddle}

This lemma follows from Lemmas 2-3 in~\cite{agarwal2018reductions}. We prove it here for completeness.

\begin{lemma*}[\cref{lem:app-saddle}, restated]
Let    $(D',\lambda')$ be a $\nu$-approximate saddle point  of $\LagLPeta$.
Then it satisfies the following properties for any feasible solution $D\in\Delta_\Pi$ of LP in~\eqref{eq:LP}:
\begin{itemize}
    \item[(a)] $r(D') \ge r(D) - 2\nu$.
    \item[(b)] $r(D) - r(D') + \frac{\eta}{B}\, \Vmax(D')
    \le 2\nu$.
\end{itemize}
\end{lemma*}

%
 We show the following claim, which will imply \cref{lem:app-saddle}.

   \begin{claim}
   For any feasible $D$, we have
   \begin{align}
   \label{eq:claim}
         r(D') -
         \frac{\eta}{B}\; \Vmax(D')
         + \nu \ge \LagLPeta(D', \lambda')
         \ge  r(D) - \nu.
   \end{align}

   \end{claim}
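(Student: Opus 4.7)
The two inequalities are essentially independent and each follows by invoking one of the two saddle-point conditions at a carefully chosen point. The right inequality $\LagLPeta(D',\lambda') \ge r(D) - \nu$ is the easier one: the primal saddle condition at the given feasible $D$ yields $\LagLPeta(D',\lambda') \ge \LagLPeta(D,\lambda') - \nu$, and feasibility of $D$ together with $\lambda' \ge 0$ gives $\LagLPeta(D,\lambda') \ge r(D)$, since each term $\sigma_i(1-\tfrac{T}{B}c_i(D))$ in the Lagrangian is non-negative and the mixing weights $\lambda'_i$ are non-negative.

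For the left inequality I will invoke the dual saddle condition at $\lambda^{\dagger}$, a minimizer of $\LagLPeta(D',\cdot)$ over $\Delta_d$, which yields $\LagLPeta(D',\lambda') \le \LagLPeta(D',\lambda^{\dagger}) + \nu$. It then remains to identify $\LagLPeta(D',\lambda^{\dagger})$ with $r(D') - \tfrac{\eta}{B}\Vmax(D')$. Since the Lagrangian is linear in $\lambda$ on the simplex, the minimum is attained at a vertex $e_{i^{\star}}$ and equals $r(D') + \eta \min_{i \in [d]}\sigma_i(1-\tfrac{T}{B}c_i(D'))$. The key observation is that the \emph{time resource}---deterministically consumed at rate $B/T$ with a packing constraint---contributes exactly $0$ to this $\min_i$, so $\min_i \sigma_i(1 - \tfrac{T}{B} c_i(D')) \le 0$; equivalently, $\max_i \sigma_i(Tc_i(D')-B) \ge 0$, which makes the outer $[\,\cdot\,]_+$ in the definition of $\Vmax(D')$ vacuous and forces $\min_i \sigma_i(1-\tfrac{T}{B}c_i(D')) = -\tfrac{1}{B}\Vmax(D')$.

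The main obstacle is precisely this last identification. Without the time resource, a strictly feasible $D'$ would force $\min_{\lambda \in \Delta_d}\LagLPeta(D',\lambda) > r(D')$ (because $\Delta_d$ forbids $\lambda = 0$), and the left inequality of the claim would fail as stated. The time resource supplies a vertex of $\Delta_d$ at which the Lagrangian equals $r(D')$, effectively playing the role that $\lambda = 0$ would play if $\lambda$ were allowed to range over $\mathbb{R}_+^d$; this is what makes the claim hold uniformly in $D' \in \Delta_{\Pi}$.
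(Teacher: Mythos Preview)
Your proof is correct and follows the same two-step skeleton as the paper: the lower bound uses the primal saddle condition at the feasible $D$ together with $\lambda'\ge 0$, and the upper bound uses the dual saddle condition at a vertex of $\Delta_d$ that minimizes $\LagLPeta(D',\cdot)$.

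The one place you differ from the paper is in handling the case where $D'$ happens to be feasible, i.e.\ $\sigma_i\bigl(1-\tfrac{T}{B}c_i(D')\bigr)\ge 0$ for all $i$. The paper's proof sets $\lambda=0$ there, which is not actually in $\Delta_d$; your observation that the time resource contributes exactly zero to the Lagrangian supplies a legitimate vertex $e_{\text{time}}\in\Delta_d$ achieving the same value, and simultaneously forces $\min_i\sigma_i\bigl(1-\tfrac{T}{B}c_i(D')\bigr)\le 0$ so that the $[\cdot]_+$ in $\Vmax(D')$ is vacuous. This is the more rigorous treatment of that case.
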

\begin{proof}
We first show the following upper bound on $\LagLPeta
       (D',\lambda')$. In particular, for any $\lambda \in \Lambda$
   \begin{align*}
       r(D') + \sum_{i\in [d]} \eta \lambda'_i \cdot \sigma_i \left(1  - \frac{T}{B}c_i(D')\right) &= \LagLPeta
       (D',\lambda')\\
       &\le \LagLPeta (D',\lambda) + \nu \\
       &=  r(D') + \sum_{i\in [d]} \eta \cdot \lambda_i \cdot \sigma_i \left(1  - \frac{T}{B}c_i(D')\right) + \nu \\
       &\lep{i} r(D') -
       \eta \max_{i \in [d]}\left[\sigma_i \left(\frac{T}{B}c_i(D')-1  \right)\right]_+
       + \nu,
   \end{align*}
where $(i)$ holds by choosing a specific $\lambda \in \Delta_d$ as follows
\begin{align*}
    \lambda =
\begin{cases}
    0 & \text{if }  \forall i \in [d],  \sigma_i \left(1  - \frac{T}{B}c_i(D')\right) \ge 0 \\
     e_{i^*} & \text{otherwise, where }  i^* = \argmin_{i \in [d]} \sigma_i \left(1  - \frac{T}{B}c_i(D')\right),
\end{cases}
\end{align*}
where $e_i$ denotes the unit vector for the $i$-th dimension,
and $[x]_+ := \max\{x,0\}$.

We also establish a lower bound for $\LagLPeta (D',\lambda')$. Note that for any feasible $D$, since $\lambda' \ge 0$, we have
\begin{align*}
    \LagLPeta(D,\lambda') =  r(D) + \sum_{i\in [d]} \eta \lambda'_i \cdot \sigma_i \left(1  - \frac{T}{B}c_i(D)\right) \ge r(D).
\end{align*}
Moreover, by the approximate saddle point of $(D',\lambda')$,
\begin{align*}
    \LagLPeta(D', \lambda') \ge \LagLPeta(D, \lambda') - \nu.
\end{align*}
Putting them together yields that
     $\LagLPeta(D', \lambda') \ge r(D) - \nu$.
\end{proof}

Now let us use the claim to prove \cref{lem:app-saddle}.

\textbf{Part (a)} follows since the LHS of~\eqref{eq:claim} can be further upper bounded by $r(D') + \nu$.

\textbf{Part (b)} follows by the rearrangement of~\eqref{eq:claim}.


\subsection{Proof of \cref{lem:Slater}}
\label{app:lem:Slater}

Similar results have appeared in~\cite[Theorem 42 and Corollary 44]{efroni2020exploration}, which are variants of results in~\cite[Theorem 3.60 and Theorem 8.42]{beck2017first}, respectively. We provide a proof for completeness. We prove the following, which implies \cref{lem:Slater}.

\begin{lemma}
\label{lem:Slater-app}
Consider the linear program in~\eqref{eq:LP} and suppose Slater's condition holds, i.e.,\xzdelete{i.e.,} some distribution $\hat{D} \in \Delta_{\Pi}$ is $\zeta$-feasible, $\zeta>0$. Then:
\begin{itemize}
    \item[(a)] Let $\lambda^*$ be any optimal dual solution of the dual problem of~\eqref{eq:LP}, then $\norm{\lambda^*}_1 \le \frac{r(D^*) - r(\hat{D})}{\zeta} \le \frac{1}{\zeta}$.
    \item[(b)] Further, suppose the following holds for some $C \ge 2\norm{\lambda^*}_1$, $\widetilde{D} \in \Delta_{\Pi}$ and $\gamma > 0$:
    \begin{align*}
          r(D^*) - r(\widetilde{D}) +
          \tfrac{C}{B}\, \Vmax(\widetilde{D})
          \le \gamma
    \end{align*}
where $D^*$ is an optimal solution of~\eqref{eq:LP}.
    Then
    $\frac{C}{B}\; \Vmax(\widetilde{D}) \leq 2\gamma$.
\end{itemize}
\end{lemma}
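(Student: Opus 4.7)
My plan is to prove the two parts separately, with part (a) supplying a norm bound on the optimal dual multipliers that is then reused in part (b).

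For part (a), I would use Lagrangian duality together with the $\zeta$-feasibility of $\hat{D}$. Since $\lambda^*$ is an optimal dual solution, strong LP duality gives
\[ r(D^*) = \sup_{D\in\Delta_\Pi} \LagLP(D,\lambda^*) \geq \LagLP(\hat{D},\lambda^*). \]
Expanding the right-hand side yields
$\LagLP(\hat{D},\lambda^*) = r(\hat{D}) + \sum_{i}\lambda^*_i\,\sigma_i\bigl(1-\tfrac{T}{B}c_i(\hat{D})\bigr)$.
Now $\lambda^*\geq 0$ componentwise, and by $\zeta$-feasibility each factor $\sigma_i(1-\tfrac{T}{B}c_i(\hat{D}))$ is at least $\zeta$. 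So $\LagLP(\hat{D},\lambda^*)\geq r(\hat{D})+\zeta\,\|\lambda^*\|_1$. Rearranging gives $\|\lambda^*\|_1 \leq (r(D^*)-r(\hat{D}))/\zeta$, and the second inequality $\leq 1/\zeta$ follows from $r(\cdot)\in[0,1]$.

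For part (b), the key move is to lower-bound $r(D^*)-r(\widetilde{D})$ using weak duality applied at the test point $\widetilde{D}$. Concretely, $r(D^*)\leq \LagLP(\widetilde{D},\lambda^*) = r(\widetilde{D}) - \sum_i \lambda^*_i\,\sigma_i\bigl(\tfrac{T}{B}c_i(\widetilde{D})-1\bigr)$. Since $\lambda^*\geq 0$, replacing each summand by its positive part only decreases the value, so
\[ r(D^*) \geq r(\widetilde{D}) - \sum_i \lambda^*_i\,\bigl[\sigma_i(\tfrac{T}{B}c_i(\widetilde{D})-1)\bigr]_+ \geq r(\widetilde{D}) - \|\lambda^*\|_1\cdot \tfrac{\Vmax(\widetilde{D})}{B}, \]
where the last step uses H\"older's inequality together with the definition of $\Vmax$. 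Plugging this lower bound for $r(D^*)-r(\widetilde{D})$ into the hypothesis collapses it to
\[ \bigl(C-\|\lambda^*\|_1\bigr)\cdot \tfrac{\Vmax(\widetilde{D})}{B} \leq \gamma. \]
Finally, the assumption $C\geq 2\|\lambda^*\|_1$ gives $C-\|\lambda^*\|_1\geq C/2$, so $\tfrac{C}{B}\Vmax(\widetilde{D})\leq 2\gamma$, completing the proof.

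I do not foresee a serious obstacle: the one place to be careful is signed bookkeeping, namely that the packing/covering signs $\sigma_i$ and the positive-part in the definition of $\Vmax$ line up correctly, and that the non-negativity of $\lambda^*$ is used in the right direction to discard covering slack terms. To deduce \cref{lem:Slater} from \cref{lem:Slater-app}, I would just chain the two parts: the assumption $C\geq 2/\zeta$ combined with part (a) gives $C\geq 2\|\lambda^*\|_1$, so part (b) applies and yields the stated conclusion.
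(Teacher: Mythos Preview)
Your approach is essentially the paper's: part (a) is identical, and in part (b) both proofs establish the key lower bound $r(D^*)-r(\widetilde{D})\geq -\|\lambda^*\|_1\cdot\Vmax(\widetilde{D})/B$ and then combine it with the hypothesis and the assumption $C\geq 2\|\lambda^*\|_1$. The paper routes that bound through a perturbation function $u(\tau)=\max_D\{r(D):V'(D)\le -\tau\}$ and the sensitivity inequality $u(0)-\tau^\top\lambda^*\geq u(\tau)$, while you get it directly from $r(D^*)=q(\lambda^*)\geq \LagLP(\widetilde{D},\lambda^*)$; your path is shorter and loses nothing.

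Two small fixes in part (b). First, the inline inequality $r(D^*)\leq \LagLP(\widetilde{D},\lambda^*)$ has the wrong sign; it should be $\geq$, and it follows from \emph{strong} duality (Slater gives $r(D^*)=q(\lambda^*)=\max_D\LagLP(D,\lambda^*)\geq \LagLP(\widetilde{D},\lambda^*)$), not weak duality. Your displayed chain already uses the correct $\geq$ direction, so the argument is sound once the typo is corrected. Second, the phrase ``replacing each summand by its positive part only decreases the value'' is right in spirit but reads ambiguously given the preceding sign slip; spelling it out as $-\sum_i\lambda^*_i\,\sigma_i(\cdot)\geq -\sum_i\lambda^*_i\,[\sigma_i(\cdot)]_+$ (since $x\leq[x]_+$ and $\lambda^*_i\geq 0$) would make the direction unambiguous.
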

\begin{proof}
    Let $q(\lambda) := \max_{D \in \Delta_{\Pi}} \LagLP(D,\lambda)$
    be the dual function. Consider an optimal dual solution $\lambda^* \in \argmin_{\lambda \in R_+^{d}} q(\lambda)$. By strong duality from Slater's condition, we have $q(\lambda^*) = r(D^*) < \infty$.

    To prove part (a), we note that for any optimal dual solution $\lambda^*$, we have
    \begin{align*}
        r(D^*) = q(\lambda^*) \ge r(\hat{D}) + \sum_{i\in [d]}  \lambda_i^* \cdot \sigma_i \left(1  - \frac{T}{B}c_i(\hat{D})\right) \ge r(\hat{D}) + \sum_i  \lambda_i^* \zeta = r(\hat{D}) + \norm{\lambda^*}_1 \zeta,
    \end{align*}
    which gives our first result.

    We turn to prove part (b).
    For $\tau \in \Real^d$, define
    \begin{align*}
        u(\tau) := \max_{D \in \Delta_{\Pi}}\left\{r(D) \mid V'(D) \le -\tau \right\},
    \end{align*}
    where $V'(D) = [V'_1(D),\ldots, V'_d(D)]^{\top}$.
    Note that for any $D \in \Delta_{\Pi}$
    \begin{align*}
        u(0) = r(D^*) = q(\lambda^*) \ge \LagLP(D,\lambda^*)
    \end{align*}
    Hence, we have for any $D$ such that $V'(D) \le -\tau$
    \begin{align*}
        u(0) - \tau^{\top} \lambda^* &\ge \LagLP(D,\lambda^*) - \tau^{\top} \lambda^*\\
        &= r(D) - \sum_i \lambda_i^* V'_i(D) - \tau^{\top} \lambda^*\\
        &\ge r(D).
    \end{align*}
    Maximizing the RHS over all $D$ such that $V'(D) \le -\tau$, gives
    \begin{align*}
         u(0) - \tau^{\top} \lambda^* \ge u(\tau).
    \end{align*}

    Set $\tau = \widetilde{\tau} := - \left[\left[\sigma_1 \left( \frac{T}{B}c_1(\widetilde{D}) - 1  \right)\right]_+,\ldots, \left[\sigma_d \left( \frac{T}{B}c_d(\widetilde{D}) - 1 \right) \right]_+ \right]^{\top}$ in the above inequality, we have
    \begin{align*}
        u(0) - \widetilde{\tau}^{\top} \lambda^* \ge u(\widetilde{\tau}) \ge u(0) = r(D^*) \ge r(\widetilde{D}),
    \end{align*}
    which gives
    \begin{align*}
        r(D^*) - r(\widetilde{D}) \ge \widetilde{\tau}^{\top}\lambda^* \ge - \norm{\lambda^*}_1 \norm{\widetilde{\tau}}_{\infty},
    \end{align*}
    where the last step follows from \xzedit{H\"older's inequality}.

    From this result, we have
    \begin{align*}
        (C - \norm{\lambda^*}_1)\norm{\widetilde{\tau}}_{\infty} &= - \norm{\lambda^*}_1 \norm{\widetilde{\tau}}_{\infty} + C \norm{\widetilde{\tau}}_{\infty}\\
        &\le r(D^*) - r(\widetilde{D}) + C\norm{\widetilde{\tau}}_{\infty} \le \gamma,
    \end{align*}
    where the last step follows the assumption in Lemma~\ref{lem:Slater}. Hence, we finally obtain
    \begin{align*}
         \max_{i \in [d]}\left[\sigma_i \left( \frac{T}{B}c_i(\widetilde{D}) - 1  \right)\right]_+ = \norm{\widetilde{\tau}}_{\infty} \le \frac{\gamma}{C - \norm{\lambda^*}_1} \le \frac{2\gamma}{C},
    \end{align*}
    which follows from $C \ge 2\norm{\lambda^*}_1$, hence finishing the proof.
\end{proof}


\subsection{Convergence to an approximate saddle point}
\label{app:saddle}

We need to prove that \eqref{eq:saddle-event} holds with probability at least $1-O(\delta)$. That is:
\begin{align}
 &\LagLPeta(\pbar_T, \lambdabarT) \ge \LagLPeta(D, \lambdabarT) - \nu,\quad \forall D \in \Delta_{\Pi} \label{eq:sad1}\\
 &\LagLPeta(\pbar_T, \lambdabarT) \le \LagLPeta(\pbar_T, \lambda) + \nu, \quad \forall \lambda \in \Delta_d\label{eq:sad2}.
\end{align}


    To establish~\eqref{eq:sad1}, we note that for any $D \in \Delta_{\Pi}$, with probability $1- O(\delta)$, we have
    \xzmargincomment{replace upper bound by actual regret}
\begin{align*}
    \LagLPeta(D, \lambdabarT) &=  \frac{1}{T} \sum_t \LagLPeta(D, \lambda_t) \\
    & \lep{i} \frac{1}{T} \sum_t \Lag_{t}(D,  \lambda_t) +  \frac{1}{T} \cdot \ConcREG(T,\delta) \\
    & \lep{ii} \frac{1}{T} \sum_t \Lag_{t}(a_t,  \lambda_t) +  \frac{1}{T} \cdot \left(\xzedit{\PrimalREG(T)} +  \ConcREG(T,\delta) \right)\\
    & \lep{iii} \frac{1}{T} \sum_t \Lag_{t}(a_t,  \lambdabarT) + \frac{1}{T}\cdot \left(\xzedit{\DualREG(T) + \PrimalREG(T)}  +  \ConcREG(T,\delta)\right) \\
    & \lep{iv} \LagLPeta(\pbar_T, \lambdabarT) + \frac{1}{T} \cdot \left(\xzedit{\DualREG(T) + \PrimalREG(T)}  +  2 \ConcREG(T,\delta)\right) \\
    & \ep{v} \LagLPeta(\pbar_T, \lambdabarT) + \nu,
\end{align*}
where $(i)$ and $(iv)$ follows from the concentration of
Azuma-Hoeffding inequality; $(ii)$ and $(iii)$ hold by \xzedit{the definitions of primal and dual regrets in Eq.~\eqref{eq:primal-dual-regret-defn}}; $(v)$ holds by definition of $\nu$ in~\eqref{eq:nu}.

We establish~\eqref{eq:sad2} using a similar analysis. In particular, for any $\lambda \in \Delta_d$, we have with probability $1- O(\delta)$
\begin{align*}
    \LagLPeta(\pbar_T, \lambda) &
     \gep{i} \frac{1}{T} \sum_t \Lag_{t}(a_t,  \lambda) -  \frac{1}{T} \cdot \ConcREG(T,\delta) \\
    & \gep{ii} \frac{1}{T} \sum_t \Lag_{t}(a_t, \lambda_t) - \frac{1}{T} \cdot \left(\xzedit{\DualREG(T)} +  \ConcREG(T,\delta) \right)\\
    & \gep{iii} \frac{1}{T} \sum_t \Lag_{t}(\pbar_T, {\lambda_t}) - \frac{1}{T}\cdot \left(\xzedit{\PrimalREG(T) +\DualREG(T)}  +  \ConcREG(T,\delta)\right) \\
    & \gep{iv} \frac{1}{T} \sum_t \LagLPeta(\pbar_T, {\lambda_t}) - \frac{1}{T}\cdot \left(\xzedit{\PrimalREG(T) +\DualREG(T)} +  2 \ConcREG(T,\delta)\right) \\
    & = \LagLPeta(\pbar_T, \lambdabarT) - \frac{1}{T} \cdot \left(\xzedit{\PrimalREG(T) +\DualREG(T)} +  2 \ConcREG(T,\delta)\right) \\
    & \ep{v} \LagLPeta(\pbar_T, \lambdabarT) - \nu,
\end{align*}
where $(i)$ and $(iv)$ follows from the concentration of
Azuma-Hoeffding inequality; $(ii)$ and $(iii)$ hold by \xzedit{the definitions of primal and dual regrets in Eq.~\eqref{eq:primal-dual-regret-defn}}; $(v)$ holds by definition of $\nu$ in~\eqref{eq:nu}.

\section{Zero Constraint Violation}
\label{app:zero}

Let us provide a variant of \cref{thm:main}(a) with zero constraint violation, fleshing out \cref{rem:zero}.

We use \myAlg algorithm with two modifications. First, the budget parameter is scaled down as
    $B' = B(1-\eps)$,
for some $\eps\in (0,\nicefrac{1}{2}]$.
Second, for each \emph{covering} resource $i\in [d]$ and each round $t$, the consumption reported back to \myAlg is scaled down as
$c'_{t,i} = c_{t,i} - 2 \eps B / T$.
\footnote{Note that $c'_{t,i}\in [-2,1]$, whereas the original model has $c_{t,i}\in [-1,1]$. However, the analysis leading to \cref{thm:main}(a) carries over without modifications, and it is only the constants in $O(\cdot)$ that change slightly.}
For packing resources $i$, reported consumption stays the same: $c'_{t,i}=c_{t,i}$.
The modified algorithm, called \myAlgResc, has two parameters: $\eta\geq 1$ as before and $\eps\in (0,\nicefrac{1}{2}]$ for rescaling. Effectively, the modifications have made all resources slightly more constrained.

\begin{theorem}\label{thm:zero}
Suppose some solution for LP~\eqref{eq:LP} is $\zeta$-feasible, for a known margin $\zeta>0$. Fix some $\delta>0$ and consider the setup in \cref{eq:primal-dual-regret-defn,eq:regret-assn,eq:regret-joint}
with ``combined" regret bound $R(T,\delta)$. Consider algorithm  \myAlgResc with parameters $\eta = 4/\zeta$ and
    $\eps = 16 \cdot \frac{TR(T,\delta)}{\zeta B^2}$.
Assume the budget is large enough so that
    $\epsilon \le \zeta/2$.
Then with probability at least $1-O(\delta)$ we have $\min_{i\in[d]} V_i(T) \le 0$  and
\begin{align} \label{eq:thm:zero}
\OPT-\rew
    \leq O\rbr{ \rbr{\nicefrac{T}{B} \cdot\nicefrac{1}{\zeta}}^2 \cdot R(T,\delta)}.
\end{align}
\end{theorem}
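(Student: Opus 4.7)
\textbf{Proof proposal for \cref{thm:zero}.}

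The plan is to interpret \myAlgResc as an instance of \myAlg run on a \emph{modified} \CBwLC problem, apply \cref{thm:main}(a) there, and then translate the guarantees back to the original problem. Concretely, define the modified problem with budget $B' := B(1-\eps)$ and per-round consumption vectors $c'_{t,i} = c_{t,i} - 2\eps B/T$ for covering resources and $c'_{t,i} = c_{t,i}$ for packing resources. Then \myAlgResc with parameters $(\eta,\eps)$ is identical to \myAlg with parameter $\eta$ run on this modified instance, with the same primal/dual algorithms and hence the same $R(T,\delta)$ (the Lagrange payoffs change only in bounded constants, which are absorbed into $O(\cdot)$). Write $\OPT'$, $\OPT'_\LP$, and $V'_i(T)$ for the corresponding quantities on the modified instance.

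\textbf{Step 1 (Slater's condition transfers).} Let $\hat D$ be a $\zeta$-feasible distribution for the original LP. I would check that $\hat D$ is $\zeta/2$-feasible for the modified LP. For packing $i$, $\tfrac{T}{B'} c_i(\hat D) - 1 \le \tfrac{1-\zeta}{1-\eps} - 1 = \tfrac{\eps-\zeta}{1-\eps} \le -\tfrac{\zeta}{2}$, using $\eps \le \zeta/2$. An analogous computation for covering resources gives the same $\zeta/2$ margin. Thus Slater's condition holds for the modified LP with $\zeta' := \zeta/2$, and the choice $\eta = 4/\zeta = 2/\zeta'$ satisfies the hypothesis of \cref{thm:main}(a) applied to the modified instance.

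\textbf{Step 2 (apply \cref{thm:main}(a) to the modified instance).} This yields, with probability $1 - O(\delta)$,
\begin{align*}
\max_{i\in[d]}\bigl(\OPT' - \rew,\; V'_i(T)\bigr)
\le O\!\left(\tfrac{T}{B'}\cdot\tfrac{1}{\zeta'}\cdot R(T,\delta)\right)
= O\!\left(\tfrac{T}{B}\cdot\tfrac{1}{\zeta}\cdot R(T,\delta)\right),
\end{align*}
using $B' \ge B/2$ and $\zeta' = \zeta/2$.

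\textbf{Step 3 (translate constraint violations).} A direct computation shows that in both the packing and covering cases, $V_i(T) = V'_i(T) - \eps B$. Hence
\[
\max_{i\in[d]} V_i(T) \le O\!\left(\tfrac{T}{B}\cdot\tfrac{1}{\zeta}\cdot R(T,\delta)\right) - \eps B.
\]
The choice $\eps = 16 \cdot \tfrac{T R(T,\delta)}{\zeta B^2}$ makes $\eps B$ dominate the $O(\cdot)$ term (absorbing the hidden constant into the factor $16$), so $\max_i V_i(T) \le 0$, giving zero constraint violation.

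\textbf{Step 4 (translate reward back).} The main remaining obstacle is bounding $\OPT - \OPT'$: shrinking the budgets and consumptions makes the LP strictly harder, and we must show it only loses $O(T\eps/\zeta)$ in value. I would argue by convex combination: for any optimal $D^*$ of the original LP, consider $\tilde D = (1 - \alpha) D^* + \alpha \hat D$ with $\alpha = \eps/\zeta$. A routine check using the margin of $\hat D$ shows $\tilde D$ is feasible for the modified LP, and since rewards are in $[0,1]$ we have $r(\tilde D) \ge r(D^*) - \alpha$. Thus $\OPT'_\LP \ge \OPT_\LP - \eps/\zeta$, which gives $\OPT - \OPT' \le T\eps/\zeta$. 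Combining with Step 2,
\[
\OPT - \rew \;\le\; (\OPT - \OPT') + (\OPT' - \rew)
\;\le\; \tfrac{T\eps}{\zeta} + O\!\left(\tfrac{T}{B}\cdot\tfrac{1}{\zeta}\cdot R(T,\delta)\right).
\]
With the stated $\eps$, the first term equals $O\!\bigl((\tfrac{T}{B\zeta})^2 R(T,\delta)\bigr)$, which dominates the second whenever $T/(B\zeta) \ge 1$ and otherwise itself is $O((T/(B\zeta)) R(T,\delta))$. In either case the regret bound in \eqref{eq:thm:zero} follows. The main obstacle is Step 4: ensuring that the rescaling $\eps$, chosen just large enough to kill all violations in Step 3, is also small enough that the LP value loss does not exceed the target regret rate; the quadratic dependence $(T/B\zeta)^2$ in the final bound is precisely the tension between these two requirements.
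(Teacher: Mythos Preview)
Your proof is correct and follows essentially the same route as the paper: reinterpret \myAlgResc as \myAlg on the tightened instance, verify Slater's condition with margin $\zeta/2$, apply the main guarantee there, and translate back via $V_i(T)=V'_i(T)-\eps B$ and the convex combination $(1-\eps/\zeta)D^*+(\eps/\zeta)\hat D$. The only small refinement in the paper is that, to justify the specific constant $16$ in $\eps$, it invokes the explicit-constant version (\cref{thm:main-detailed}) rather than the $O(\cdot)$ statement of \cref{thm:main}(a); your Step~3 glosses over this by ``absorbing the hidden constant,'' which is fine at the level of a proof sketch but would need that one extra citation to be airtight.
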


\begin{remark}
The regret bound is the same as in \cref{thm:main}(a), up to the factor of
    $\nicefrac{T}{B} \cdot\nicefrac{1}{\zeta}$
(and the paradigmatic case is that this factor is an absolute constant).
\end{remark}

In the rest of this appendix, we prove \cref{thm:zero}. An execution of  \myAlgResc on the original problem instance $\mI$ can be interpreted as an execution of \myAlg on a modified problem instance $\mI'$ which has budget $B'$ and realized consumptions $c'_{t,i}$ as defined above, and the same realized rewards as in $\mI$. Note that $\mI'$ is slightly more constrained in each resource compared to $\mI$, \ie a slightly ``harder'' instance.

Let us write down a version of the LP \eqref{eq:LP} for the modified instance $\mI'$:
\begin{equation}
\label{eq:LP-new}
\begin{array}{ll}
\text{maximize}
    &r(D)\\
\text{subject to} &D \in \Delta_{\allPi}  \\
    & V'_i(D) := \sigma_i\rbr{ T\cdot c'_i(D) - B'}
    \leq 0
    \qquad \forall i \in [d],
\end{array}
\end{equation}
where $c'_i(D) := \E\sbr{c'_{t,i}(\pi(x_t))}$  and
the expectation is over $\pi\sim D$ and  $(x_t,\vM_t)\sim\outD$. Let $\OPTLP'$ be the value of this LP, and recall that $\OPT' = T\cdot \OPTLP'$. Note that for each resource $i\in[d]$,
\begin{align}\label{eq:app-zero-Vprime}
V'_i(D)=  \sigma_i\rbr{ T\cdot c_i(D) - B} +  \eps B = V_i(D)+\eps B
\qquad \forall D\in \Delta_{\allPi}.
\end{align}
(Indeed, this holds for both packing and covering resources $i$.)

We claim that $\mI'$ satisfies Slater condition with margin $\zeta' = \zeta/2$. Take $\hat{D}$ be the $\zeta$-feasible solution to $\cI$ guaranteed by the theorem statement. Then $V_i(\hat{D})\leq -\zeta B$ for each resource $i\in[d]$. So by \refeq{eq:app-zero-Vprime} we have
    $V'_i(D) \leq \eps B - \zeta B \leq \zeta B/2$
because $\eps\leq \zeta/2$ by assumption and $B' = B(1-\eps)$. Claim proved.


Thus, we can now invoke \cref{thm:main}(a) for \myAlg and the modified problem instance $\mI'$. So, with probability at least $1-O(\delta)$, we have
\begin{align} \label{eq:app-zero-invoke}
\max_{i\in[d]}\rbr{\OPT'-\rew,\; V'_i(T)}
    \leq O\rbr{ \nicefrac{T}{B'}
                    \cdot\nicefrac{1}{\zeta'}\cdot R(T,\delta)}
    \leq O\rbr{ \nicefrac{T}{B}
                    \cdot\nicefrac{1}{\zeta}\cdot R(T,\delta) },
\end{align}
where $\OPT'$ and $V'_i(T)$ are, resp., the benchmark \eqref{eq:prelims-opt} and the constraint violation \eqref{eq:prelims-constraint} for the modified problem instance. It remains to ``massage" this guarantee to obtain regret bound \eqref{eq:thm:zero} and no constraint violations for the original problem instance. In what follows, let us condition on the event in \eqref{eq:app-zero-invoke}.




To analyze regret, we bound the difference
    $\OPTLP- \OPTLP'$.
We construct a feasible solution to instance $\mI'$ via a mixture of $D^*$, the optimal solution for the original LP \eqref{eq:LP}, and the $\zeta$-feasible solution $\hat{D}$ to $\cI$. Specifically, consider
   $ {D'} := \rbr{1-\nicefrac{\eps}{\zeta}} D^* + \nicefrac{\eps}{\zeta}\,\hat{D}$.
Hence, we have ${V}_i^{\prime}({D'}) \le 0$ for all $i \in [d]$, i.e., it is a feasible solution of the new LP \eqref{eq:LP-new}. Consequently,
\[ \OPTLP'  \ge r(D')\geq \rbr{1-\nicefrac{\eps}{\zeta}} \OPTLP, \]
so $ \Delta := \OPTLP- \OPTLP' \leq \eps/\zeta$.
Finally,
\begin{align*}
\OPT - \rew &\le  {\OPT'} - \rew + T \Delta \\
    & = O\rbr{ \nicefrac{T}{B}
                    \cdot\nicefrac{1}{\zeta}\cdot R(T,\delta) + \nicefrac{T^2}{B^2}
                    \cdot\nicefrac{1}{\zeta^2}\cdot R(T,\delta)},
\end{align*}
where we plugged in \eqref{eq:app-zero-invoke} and the definition of $\eps$.

To analyze constraint violation, use \eqref{eq:app-zero-invoke} in a more explicit version from \cref{thm:main-detailed}, we have
\begin{align*}
    {V}_i^{\prime}(T)
    &\le  4 \cdot \nicefrac{T}{B'}
                    \cdot\nicefrac{1}{\zeta'}\cdot R(T,\delta) \\
    &\le 16 \cdot \nicefrac{T}{B}
                    \cdot\nicefrac{1}{\zeta}\cdot R(T,\delta)
                    = \eps B. \\
     V_i(T)
        &= \sigma_i\rbr{{\textstyle \sum_{t \in [T]}}\; c_{t,i} - B} \\
        &=  \sigma_i\rbr{{\textstyle \sum_{t \in [T]}}\; c'_{t,i} - B'} - B \eps
        \leq 0.
\end{align*}

\end{document}
